\documentclass{article}

 \usepackage[preprint,nonatbib]{neurips_2026}

\usepackage[utf8]{inputenc} %
\usepackage[T1]{fontenc}    %
\usepackage[colorlinks=true,allcolors=blue,final]{hyperref}
\usepackage{url}            %
\usepackage{booktabs}       %
\usepackage{amsfonts}       %
\usepackage{nicefrac}       %
\usepackage{microtype}      %
\usepackage[dvipsnames,svgnames,table]{xcolor}
\usepackage{xspace}
\usepackage{graphicx}
\usepackage{siunitx}
\usepackage{tcolorbox}
\usepackage{mwe}

\usepackage[numbers]{natbib} %
\let\cite\citep

\usepackage{amsmath,amssymb,amsthm,bm,mathtools}

\usepackage[small]{caption}
\usepackage{booktabs}
\usepackage{enumitem}
\usepackage{subcaption}

\usepackage{indentfirst}
\usepackage{csquotes}
\usepackage{soul}
\usepackage{xcolor}
\usepackage{wrapfig}

\usepackage{algorithm}
\usepackage{algorithmic}

\usepackage{placeins}
\usepackage{float}
\usepackage{caption}

\newcommand{\optvar}{\bm{x}}
\newcommand{\R}{\mathbb{R}}

\newcommand{\axisbox}[1]{%
  {\fontsize{3}{3}\selectfont $#1$}%
}

\newcommand{\plotwithaxislabels}[1]{%
  \begin{tikzpicture}
    \node[inner sep=0pt, anchor=south west] (img) at (0,0)
      {\includegraphics[width=\linewidth]{#1}};

    \begin{scope}[x={(img.south east)}, y={(img.north west)}]
      \node[fill=white, draw=none, inner sep=0.5pt] at (0.47, 0.05) {\axisbox{x_0}};
      \node[fill=white, draw=none, inner sep=0.5pt, rotate=90] at (0.04,0.53) {\axisbox{x_1}};
    \end{scope}
  \end{tikzpicture}%
}
\newcommand{\axisboxtwo}[1]{%
  {\fontsize{5}{10}\selectfont $#1$}%
}
\newcommand{\plotwithaxislabelstwo}[6]{%
  \begin{tikzpicture}
    \node[inner sep=0pt, anchor=south west] (img) at (0,0)
      {\includegraphics[width=#1]{#2}};
    \begin{scope}[x={(img.south east)}, y={(img.north west)}]
      \node[fill=white, draw=none, inner sep=2pt] at (#3,#4) {\axisboxtwo{x_0}};
      \node[fill=white, draw=none, inner sep=2pt, rotate=90] at (#5,#6) {\axisboxtwo{x_1}};
    \end{scope}
  \end{tikzpicture}%
}

\DeclareMathOperator*{\argmax}{arg\,max}

\DeclareMathOperator{\Tr}{Tr}

\newtheorem{theorem}{Theorem}

\newtheorem{lemma}[theorem]{Lemma}
\newtheorem*{lemma*}{Lemma}

\newtheorem*{proposition*}{Proposition}
\newtheorem{assumption}{Assumption}

\newcommand{\ie}{i\/.\/e\/.,\/~}

\newcommand{\cf}{cf\/.\/~}

\definecolor{compare1}{RGB}{0, 114, 178}
\definecolor{compare2}{RGB}{213, 94, 0}
\definecolor{compare3}{RGB}{204, 121, 167}

\newcommand{\prefsqp}{PrefSQP\xspace}

\newcommand{\cthingy}{\bm{C}}

\usepackage{tikz}
\usetikzlibrary{arrows.meta, positioning, fit, backgrounds, calc, decorations.pathreplacing, calligraphy, shapes.geometric,tikzmark}

\DeclareRobustCommand{\markerRedBlack}{%
  \tikz[baseline=-0.5ex]\filldraw[
    fill=red,
    draw=black,
    line width=0.4pt
  ] (0,0) circle (2pt);%
}

\DeclareRobustCommand{\markerBlue}{%
  \tikz[baseline=-0.5ex]\fill[compare1] (0,0) circle (2pt);%
}
\DeclareRobustCommand{\markerBlack}{%
  \tikz[baseline=-0.5ex]\fill[black] (0,0) circle (2pt);%
}

\DeclareRobustCommand{\markerGreyCircle}{%
  \tikz[baseline=-0.5ex]\draw[gray, line width=0.5pt] (0,0) circle (2pt);%
}

\DeclareRobustCommand{\markerGreyBox}{%
  \tikz[baseline=-0.6ex]\draw[gray, line width=0.5pt] (-2pt,-2pt) rectangle (2pt,2pt);%
}

\definecolor{bayescolor}{RGB}{0,114,178}
\definecolor{sqpcolor}{RGB}{213,94,0}
\definecolor{mergecolor}{RGB}{86,180,86}

\newcommand{\highlightCt}{%
\tikz[baseline=(Ct.base)]{
  \node[
    fill=compare3!30,
    rounded corners,
    inner xsep=2pt,
    inner ysep=2pt
  ] (Ct) {$\hat{\cthingy}$};
}%
}

\tcbset{
    lavenderbox/.style={
            colback=Lavender!30,
            colframe=Lavender!80!black!50,
            coltitle=black,
            fonttitle=\bfseries,
            boxrule=0pt,
            arc=5pt,
            left=5pt,
            right=5pt,
            top=0pt,
            bottom=0pt
        },
    lavenderboxsmall/.style={
            colback=Lavender!30,
            colframe=Lavender!80!black!50,
            coltitle=black,
            fonttitle=\bfseries,
            boxrule=1pt,
            arc=5pt,
            left=5pt,
            right=5pt,
            top=0pt,
            bottom=0pt
        },
    bluebox/.style={
            colback=LightBlue!30,
            colframe=LightBlue!80!black,
            coltitle=black,
            fonttitle=\bfseries,
            boxrule=1pt,
            arc=5pt,
            left=5pt,
            right=5pt,
            top=5pt,
            bottom=5pt
        },
    greenbox/.style={
            colback=Green!15,
            colframe=Green!60!black,
            coltitle=black,
            fonttitle=\bfseries,
            boxrule=0.8pt,
            arc=5pt,
            left=5pt,
            right=5pt,
            top=5pt,
            bottom=5pt
        },
    orangebox/.style={
            colback=Orange!20,
            colframe=Orange!90!black,
            coltitle=black,
            fonttitle=\bfseries,
            boxrule=1pt,
            arc=5pt,
            left=5pt,
            right=5pt,
            top=5pt,
            bottom=5pt
        },
    yellowbox/.style={
            colback=Yellow!20,
            colframe=Yellow!50!black,
            coltitle=black,
            fonttitle=\bfseries,
            boxrule=0.8pt,
            arc=5pt,
            left=5pt,
            right=5pt,
            top=5pt,
            bottom=5pt
        }
}

\title{Local Preferential Bayesian Optimization}

\author{
Johanna Menn$^{1,*}$ \quad
Miriam Kober$^{2,3,*}$ \quad
Paul Brunzema$^{1}$ \quad
David Stenger$^{4}$ \quad
Sebastian Trimpe$^{1}$ \\
\\
$^{1}$Institute for Data Science in Mechanical Engineering, RWTH Aachen University, Aachen, Germany \\
$^{2}$Department of Clinical Research, University of Bern, Bern, Switzerland \\
$^{3}$Center for Reproducible Science and Research Synthesis, University of Zurich, Zurich, Switzerland \\
$^{4}$aiXopt GmbH, Aachen, Germany \\
$^{1}$\texttt{\{johanna.menn, paul.brunzema, trimpe\}@dsme.rwth-aachen.de}\\
$^{2}$\texttt{miriam.kober@unibe.ch}, $^{4}$\texttt{dstenger@aixopt.ai} 
\\
$^{*}$Equal contribution
}

\begin{document}

\maketitle

\begin{abstract}
    Bayesian optimization (BO) is a popular and effective approach for tuning expensive, noisy experiments, but requires the formulation of an explicit objective function.
    Preferential BO (PBO) removes this requirement by learning from pairwise human feedback, yet existing methods struggle to efficiently optimize beyond low- and medium-dimensional problems due to their global search approaches.
    We address this limitation by developing a family of local PBO methods that transfer key ideas from high-dimensional BO to the preferential setting.
    In particular, we introduce
    local PBO methods
    which adapt trust-region and derivative-informed local search to pairwise preference feedback, where the latter exploits first- and second-order derivatives of the Laplace-approximated GP posterior.
    Our benchmark on GP sample paths, standard optimization benchmark functions, and policy-search tasks shows that local PBO methods are especially effective in high-dimensional and complex landscapes with steep optima.
    Compared with global preference-based baselines, they can substantially reduce cumulative regret, making them particularly useful for real-world preference-based optimization tasks such as policy search.
\end{abstract}

\section{Introduction}

Bayesian optimization (BO) \cite{garnett_bayesoptbook_2023} is widely used to tune expensive black-box systems, including controller parameters \cite{neumann2019data, coutinho2024human, holzapfel2024event, menn2024lipschitz}, industrial processes \cite{deblasi2024safe, zimmermann2025bayesian}, and policies for robotic systems \cite{he2025simulation, hose2024fine}.
However, many such applications do not admit a natural scalar objective: performance may depend on qualitative user feedback based on criteria such as smoothness or comfort.
Preferential Bayesian optimization (PBO) addresses this issue by replacing numeric objective evaluations with pairwise comparisons \cite{gonzalez2017PBO}, making it particularly well suited to real-world optimization tasks and human feedback.

Despite its practical appeal, existing PBO methods have primarily been developed and applied in low- and medium-dimensional settings \cite{astudillo2023qEUBO, Xu2024PrincipledPB, gonzalez2017PBO}. 
In high-dimensional problems, the global search strategies of these state-of-the-art methods can become inefficient due to the rapidly growing search space \cite{muller2021local,wu2023behavior}.
Standard BO often addresses such settings through local search approaches, such as trust-region BO methods \cite{eriksson2019scalable} and gradient-based BO methods \cite{muller2021local,nguyen2022local,tang2025nest,he2025simulation,brunzema_bayesqp_2026}.
This is especially attractive when cumulative regret matters in addition to final performance: in real-world applications such as controller tuning, poor intermediate evaluations can be costly \cite{menn2026preferential}.

In this paper, we revisit preferential Bayesian optimization through the lens of local search.
Rather than treating PBO as a purely global search problem, we develop a local view in which pairwise comparisons are used to progressively refine promising regions of the design space.
Building on this perspective, we adapt two central ideas from high-dimensional BO, namely trust-region optimization~\cite{eriksson2019scalable} and derivative-informed~\cite{muller2021local, brunzema_bayesqp_2026} local refinement, to the preferential setting (Figure~\ref{fig:search_mechanism}).
The resulting algorithms form a new family of local PBO methods.
Among them, we introduce, to the best of our knowledge, the first PBO methods that leverage higher-order information from the Laplace-approximated Gaussian process (GP) posterior, including both first- and second-order variants.

Our experiments on GP sample paths, standard optimization benchmark functions, and policy-search tasks show that the proposed local PBO methods are particularly effective in high-dimensional and complex landscapes with steep optima.
While global methods remain competitive in smoother and lower-dimensional settings, local approaches often achieve lower cumulative regret by efficiently refining promising regions.
These results make local PBO methods especially useful for real-world preference-based optimization tasks.

\paragraph{Contributions}
In summary, the key contributions of this paper are:
\begin{enumerate}[label=(\roman*), itemsep=0pt, topsep=0pt, parsep=0pt, partopsep=0pt, leftmargin=*]
    \item[\textbf{C1}] We introduce a new family of local PBO methods that bring trust-region optimization and derivative-informed local refinement to pairwise preference-based optimization.
    \item[\textbf{C2}] With GIPBO and PrefSQP, we develop the first derivative-informed PBO methods, which exploit first- and second-order posterior predictive information from the Laplace-approximated pairwise GP posterior.
    \item[\textbf{C3}] We benchmark local PBO against global preference-based baselines on GP sample paths, standard benchmark functions, and reinforcement-learning policy-search tasks, showing that local methods are especially effective in complex high-dimensional landscapes and can reduce cumulative regret when good initial regions are available.
\end{enumerate}

\begin{figure}[t]
    \centering

    \begin{subfigure}{0.32\linewidth}
        \centering
        \includegraphics[width=1\linewidth, trim=0 0 0 0.0cm, clip]{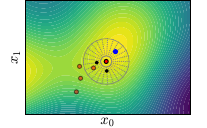}
        \caption{GIPBO}
        \label{fig:placeholder1}
    \end{subfigure}
    \hfill
    \begin{subfigure}{0.32\linewidth}
        \centering
        \includegraphics[width=1\linewidth, trim=0 0 0 0.0cm, clip]{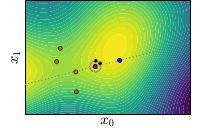}
        \caption{PrefSQP}
        \label{fig:placeholder2}
    \end{subfigure}
    \hfill
    \begin{subfigure}{0.32\linewidth}
        \centering
        \includegraphics[width=1\linewidth, trim=0 0 0 0.0cm, clip]{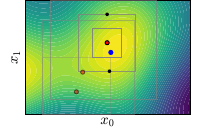}
        \caption{TuRPBO}
        \label{fig:placeholder2}
    \end{subfigure}
    \caption{Overview of the local methods with 15 evaluations per plot. In GIPBO and PrefSQP, \markerRedBlack{} marks the gradient-estimation center; in TuRPBO, it marks the trust-region center. \markerBlack{} marks previous evaluations, \markerBlue{} the next evaluation, \markerGreyCircle{} search bounds, and \markerGreyBox{} the trust region.
(a) GIPBO selects points within the search bounds to minimize gradient uncertainty at the center, then takes a fixed-step gradient update.
(b) PrefSQP estimates the gradient and Hessian by random search within the bounds, followed by a line search in the descent direction.
(c)~TuRPBO adapts the trust-region size heuristically.}
    \label{fig:search_mechanism}
    \vspace{-1em}
\end{figure}

\section{Problem Formulation}

We consider the maximization of an unknown function \(f: \Theta \rightarrow \mathbb{R}\) over a compact set \(\Theta \subseteq \mathbb{R}^d\):
\begin{equation}
    \optvar^* \in \arg\max_{\optvar \in \Theta} f(\optvar).
    \label{eq:optimization problem}
\end{equation}
Instead, at each iteration \(n \in \mathbb{I}_T \coloneqq \{1, \dots, T\}\), where \(T\) is the total query budget, an algorithm queries a pair of inputs
\((\optvar_n,\optvar_n') \in \Theta^2\) and observes only a binary preference.
Specifically, the underlying noisy evaluations of the unknown utility function are given by
\begin{equation}
    y_n = f(\optvar_n) + \epsilon_n,
    \qquad
    y_n' = f(\optvar_n') + \epsilon_n',
    \qquad
    \epsilon_n,\epsilon_n' \overset{\mathrm{i.i.d.}}{\sim}
    \mathcal{N}(0,\sigma_\mathrm{n}^2).
    \label{eq:noise}
\end{equation}
The observation is a binary random variable \(\pi_n \in \{0,1\}\) indicating which noisy evaluation is larger:
\begin{equation}
    \pi_n =
    \begin{cases}
        0, & \text{if } y_n > y_n',\\
        1, & \text{otherwise}.
    \end{cases}
    \label{eq:pi}
\end{equation}
Equivalently, each observation can be represented as an ordered comparison \((\optvar_n^+,\optvar_n^-)\), where \(\optvar_n^+\) denotes the preferred input and \(\optvar_n^-\) the non-preferred input.

\paragraph{Goal of the Paper}
We aim to solve \eqref{eq:optimization problem} in high-dimensional settings using only noisy pairwise comparisons \eqref{eq:pi}.
To this end, we propose local PBO methods that augment PBO with state-of-the-art local BO mechanisms. Specifically, we characterize the changes needed to enable the use of these mechanisms with pairwise observations and to evaluate when local PBO improves over existing global preference-based optimization methods.

\section{Related Work}

\paragraph{Preferential Bayesian Optimization}
Existing PBO methods are primarily built on the pairwise GP model of \cite{chu2005preference}, combined with global acquisition strategies.
Early work pairs this surrogate with Thompson sampling (TS) \cite{gonzalez2017PBO}, as implemented in the Polar framework \cite{tucker2022polar}. More recent methods use the decision-theoretic acquisition function EUBO \cite{lin2022preference} and its batch extension qEUBO \cite{astudillo2023qEUBO}, which outperform TS approaches and are available in BoTorch \cite{balandat2020botorch}. 
POP-BO \cite{Xu2024PrincipledPB} is, to our knowledge, the only PBO method with regret guarantees, but relies on direct function evaluations for hyperparameter selection and is therefore not applicable in a strictly preference-based setting \cite{erarslan_consecutive_2025}.
Beyond pairwise GPs, the skewed GP approximation \cite{benavoli2021preferential} underlies the hallucination believer method \cite{takeno2023towards}, which can be used with standard acquisition functions such as UCB and EI, while parametric, non-Bayesian methods like GLISp \cite{bemporad2021global} model the utility as a linear combination of radial basis functions.
Despite these advances, all state-of-the-art PBO methods rely on global search and therefore inherit the same inefficiencies as standard BO in high dimensions.

\paragraph{Scaling PBO to High Dimensions}
High-dimensional PBO has mainly been addressed through subspace reduction: LineCospar \cite{trucker2020human} uses the mechanism of \citet{kirschner2019adaptive} to scale up to six dimensions, \citet{zhang2023high} exploit the assumption that effective dimensionality is lower than the ambient one, and \citet{mikkola2020projective} similarly employ projective subspaces.
These methods all assume the preference structure lies in a low-dimensional subspace, which may not hold in complex settings.
An alternative direction is slider-based PBO \cite{koide2024high}, which balances local exploitation and global exploration via EUBO, but its feedback mechanism may not generalize and no implementation is publicly available. 
Overall, scalable PBO methods that operate directly in the original high-dimensional space remain limited; this paper proposes new ideas for scaling PBO.

\paragraph{Local Bayesian Optimization}
While subspace projections have also been explored in standard BO \cite{kirschner2019adaptive}, restricting the search to local subregions has emerged as a key mechanism for high-dimensional problems, achieving substantially improved sample efficiency by prioritizing local refinement over global exploration.
Prominent approaches follow two paradigms: trust-region methods \cite{eriksson2019scalable, eriksson_scalable_2021,yi2024improving} that iteratively restrict the search to dynamically adapted local regions, and gradient-informed methods such as GIBO \cite{muller2021local} and subsequent variants \citep{nguyen2022local, tang_cages_2024, he2025simulation,fan_minimizing_2024, von_rohr_local_2024, ip2025user} that exploit local gradient information of the surrogate.
More recently, BayeSQP \cite{brunzema_bayesqp_2026} extends the gradient-based line by additionally incorporating second-order information.
Motivated by the promise of these local approaches and the lack of scalable PBO methods, we adapt trust-region and derivative-informed local BO mechanisms to the preferential setting, giving rise to a new family of local PBO methods.

\section{Preference Modeling with Pairwise Gaussian Processes}
\label{sec:pref_modeling}

Our local PBO methods use pairwise GPs as surrogate models, which are the standard modeling choice in many PBO methods \cite{gonzalez2017PBO, tucker2022polar, lin2022preference, astudillo2023qEUBO, Xu2024PrincipledPB}.
In this section, we first review pairwise GPs following \cite{chu2005preference} and \cite{brochu2010tutorial}.
We then extend this formulation by deriving the approximate joint conditional distribution of the latent function value, its gradient, and its Hessian under the Laplace-approximated pairwise GP posterior.
This joint distribution provides the predictive means and variances needed by our derivative-informed local PBO methods GIPBO and PrefSQP.
Since this is the first work to use first- and second-order posterior predictive information in PBO, we explicitly formalize these quantities for pairwise GPs and state the assumptions required for the derivations.

\paragraph{Prior}
GPs \cite{rasmussen2006gps} model functions over arbitrary inputs and are characterized by their mean and covariance functions.
In the preferential setting, we follow \cite{chu2005preference} and \cite{brochu2010tutorial}, and assume that the latent function values are modeled by a zero-mean GP without loss of generality.
Let \( \bm{X} = [\optvar_i]_{i=1, \dots, P}\) denote the set of unique input locations appearing in the pairwise comparisons.
Evaluating the GP at these locations yields a multivariate Gaussian random variable \(\bm{f} = [f(\optvar_1), f(\optvar_2), ..., f(\optvar_M)]^\top\) with zero mean and covariance matrix \(K(\bm{X},\bm{X})\in \mathbb{R}^{P \times P}\), whose \(ij\)-th entry is determined by the covariance function \(k(\optvar_i, \optvar_j)\).
The GP prior is then $\bm{f} \sim \mathcal{N} (0, K(\bm{X},\bm{X}))$.
In the following, we will use the shorthand notation of $\bm{K}_{\bm{XX}} \coloneqq K(\bm{X},\bm{X})$ for notational convenience.

\paragraph{Likelihood} Unlike standard BO, PBO does not observe noisy function values directly. Instead, each query yields only a pairwise preference: recall that given two inputs \(\optvar_n\) and \(\optvar^\prime_n\), we observe which output \(y_n = f(\optvar_n) + \epsilon_n\) or \(y^\prime_n = f(\optvar^\prime_n) + \epsilon^\prime_n\), with \(\epsilon_n,\epsilon^\prime_n \sim \mathcal{N}(0, \sigma_\mathrm{n}^2)\), is larger. We write each comparison as an ordered pair \((\optvar_n^+,\optvar_n^-)\), where \(\optvar_n^+\) denotes the preferred input and \(\optvar_n^-\) the non-preferred input. This induces the probit likelihood
\begin{equation}
p(\optvar_n^+ \succ \optvar_n^- \mid \bm{f})
=
p(y_n^+ > y_n^- \mid f(\optvar_n^+), f(\optvar_n^-))
=
\Phi(z_n),
\qquad
z_n =
\frac{f(\optvar_n^+) - f(\optvar_n^-)}{\sqrt{2}\sigma_\mathrm{n}}.
\end{equation}
Here, \(\Phi\) denotes the cumulative distribution function of the standard normal.
For an ordered data set of pairwise comparisons \(\mathcal{D}= \{(\optvar_n^+,\optvar_n^-)\}_{n=1}^N\), with \(N\) observations, the likelihood is then the factorization
\begin{equation}
\textstyle
    p(\mathcal{D} \mid \bm{f}) = \prod_{n=1}^N \Phi(z_n).
    \label{eq:likelihood}
\end{equation}
\paragraph{Posterior}
Since the likelihood in \eqref{eq:likelihood} is non-Gaussian, the posterior \(P(\bm{f} \mid \mathcal{D})\propto P( \mathcal{D}\mid\bm{f})P(\bm{f})\) is also non-Gaussian.
We therefore need to find an approximation of the posterior and follow \citet{chu2005preference} who use a Laplace approximation around the maximum a posteriori (MAP) estimate
\begin{equation}
    \hat{\bm{f}}
    =
    \arg\min_{\bm{f}} S(\bm{f}),
    \qquad \text{where} \qquad
    S(\bm{f})
    =
    -\sum_{n=1}^{N}\log\Phi(z_n)
    +
    \frac{1}{2}\bm{f}^\top \bm{K}_{\bm{XX}}^{-1}\bm{f} .
\end{equation}
The optimality condition and Hessian at the MAP are
\begin{equation}
    \nabla S(\hat{\bm{f}})
    =
    -\nabla_{\bm{f}}\log p(\mathcal{D}\mid\bm{f})\big|_{\hat{\bm{f}}}
    +
    \bm{K}_{\bm{XX}}^{-1}\hat{\bm{f}}
    =
    0,
    \qquad
    \nabla^2 S(\hat{\bm{f}})
    =
    \bm{K}_{\bm{XX}}^{-1}+\cthingy .
\end{equation}

Let \(\bm{h}_n\in \R^M\) have entries \(+1\) at \(\optvar_n^+\), \(-1\) at \(\optvar_n^-\), and \(0\) otherwise, so that \(z_n=\bm{h}_n^\top\bm{f}/(\sqrt{2}\sigma_\mathrm{n})\). Then
\begin{equation}
    \cthingy
    =
    \sum_{n=1}^{N}
    \frac{1}{2\sigma_\mathrm{n}^2}
    \left[
        \left(\frac{\varphi(z_n)}{\Phi(z_n)}\right)^2
        +
        z_n\frac{\varphi(z_n)}{\Phi(z_n)}
    \right]_{\hat{\bm{f}}}
    \bm{h}_n \bm{h}_n^\top .
    \label{eq:C}
\end{equation}
where \(\varphi\) is the probability density function of the standard normal distribution.
With this, we can then formulate the Laplace-approximated posterior distribution as
\begin{equation}
    p(\bm{f}\mid \mathcal{D})
    \approx
    q(\bm{f}\mid \mathcal{D})
    =
    \mathcal N\!\left(
        \hat{\bm{f}},
        \left(\bm{K}_{\bm{XX}}^{-1}+\cthingy\right)^{-1}
    \right).    
\end{equation}
Crucially, \(\cthingy\in\mathbb{R}^{P\times P}\) is data-dependent through the MAP estimate \(\hat{\bm f}\) and the comparison outcomes.
This differs from standard GP regression with Gaussian observation noise, where the corresponding likelihood precision contribution is typically a fixed matrix, such as \(\sigma_{\text{noise}}^{-2}\bm{I}\).
This will be important when actively trying to minimize gradient uncertainty in Sec.~\ref{sec:gipbo}.

\paragraph{Joint Conditional Distribution}
To enable derivative-informed optimization of the objective function, analogous to local BO methods in the standard scalar-feedback setting \cite{muller2021local, brunzema_bayesqp_2026}, we require not only the predictive mean and variance of \(f\), but also posterior information about its gradient and Hessian.
For a sufficiently differentiable GP prior, function values and derivatives are jointly Gaussian \cite{rasmussen2006gps}.
Since the pairwise preference likelihood makes the exact posterior non-Gaussian, we use the Laplace approximation introduced above, which yields a Gaussian approximate surrogate.
This induces the following approximate joint predictive distribution at a test point \(\optvar\):
\begin{equation}
    \left(
    \begin{bmatrix}
        f(\optvar) \\
        \nabla f(\optvar) \\
        \operatorname{vec} \{\nabla^2 f(\optvar)\}
    \end{bmatrix}
    \;\middle|\;
    \bm{X},\mathcal{D}
    \right)
    \approx
    \mathcal{N}\!\left(
    \begin{bmatrix}
        \mu_f^{\mathrm{pref}}(\optvar) \\
        \boldsymbol{\mu}_{\nabla f}^{\mathrm{pref}}(\optvar) \\
        \operatorname{vec}  \{\boldsymbol{\mu}_{\nabla^2 f}^{\mathrm{pref}}(\optvar) \}
    \end{bmatrix},
    \bm{\Sigma}_{\bm{g}}^{\mathrm{pref}}(\optvar)
    \right),
\label{eq:joint_derivative_predictive}
\end{equation}
where \(\bm{\Sigma}_{\bm{g}}^{\mathrm{pref}}(\optvar)\) contains the covariance blocks between \(f(\optvar)\), \(\nabla f(\optvar)\), and \(\operatorname{vec} \{\nabla^2 f(\optvar)\}\).
The induced approximate predictive means are
\begin{equation}
    \resizebox{\textwidth}{!}{$
    \underbracket{
    \mu_f^{\mathrm{pref}}(\optvar)
    =
    K(\optvar,\bm{X})\bm{K}_{\bm{XX}}^{-1}\hat{\bm{f}}
    }_{\in \mathbb{R}}
    \quad
    \underbracket{
    \boldsymbol{\mu}_{\nabla f}^{\mathrm{pref}}(\optvar)
    =
    \nabla_{\optvar} K(\optvar,\bm{X})\bm{K}_{\bm{XX}}^{-1}\hat{\bm{f}}
    }_{\in \mathbb{R}^{d}}
    \quad
    \underbracket{
    \boldsymbol{\mu}_{\nabla^2 f}^{\mathrm{pref}}(\optvar)
    =
    \nabla_{\optvar}^2 K(\optvar,\bm{X})\bm{K}_{\bm{XX}}^{-1}\hat{\bm{f}}
    }_{\in \mathbb{R}^{d\times d}}
    $} \nonumber
\end{equation}
Since $\cthingy$ is only guaranteed to be positive semidefinite \cite{chu2005preference}, we use the regularized matrix $\hat{\cthingy}=\cthingy+\varepsilon_{\mathrm{reg}}\bm{I}$,
where $\varepsilon_{\mathrm{reg}}>0$ is a small regularization constant.
For compactness, let
\(\mathcal{L}_{0,\optvar} f=f(\optvar)\),
\(\mathcal{L}_{1,\optvar} f=\nabla f(\optvar)\), and
\(\mathcal{L}_{2,\optvar} f=\operatorname{vec}\{\nabla^2 f(\optvar)\}\), and define
\(\bm{k}_{a}(\optvar,\bm{X})=\mathcal{L}_{a,\optvar}K(\optvar,\bm{X})\) for \(a\in\{0,1,2\}\).
Then, under the Laplace-approximated pairwise GP posterior, the covariance block between derivative orders \(a,b\in\{0,1,2\}\) is
\begin{equation}
    \bm{\Sigma}_{ab}^{\mathrm{pref}}(\optvar)
    =
    \mathcal{L}_{a,\optvar}\mathcal{L}_{b,\optvar'} 
    k(\optvar,\optvar')\big|_{\optvar'=\optvar}
    -
    \bm{k}_{a}(\optvar,\bm{X})
    \bigl(
        \bm{K}_{\bm{XX}} + \hat{\cthingy}^{-1}
    \bigr)^{-1}
    \bm{k}_{b}(\bm{X},\optvar),
    \label{eq:general_covariance_block}
\end{equation}
where \(\bm{k}_{b}(\bm{X},\optvar)=\bm{k}_{b}(\optvar,\bm{X})^\top\), with the corresponding derivatives taken with respect to the second kernel argument.
The full covariance matrix in \eqref{eq:joint_derivative_predictive} is obtained by stacking these blocks as
\begin{equation}
    \bm{\Sigma}_{\bm{g}}^{\mathrm{pref}}(\optvar)
    =
    \begin{bmatrix}
        \bm{\Sigma}_{00}^{\mathrm{pref}}(\optvar)
        &
        \bm{\Sigma}_{01}^{\mathrm{pref}}(\optvar)
        &
        \bm{\Sigma}_{02}^{\mathrm{pref}}(\optvar)
        \\
        \bm{\Sigma}_{10}^{\mathrm{pref}}(\optvar)
        &
        \bm{\Sigma}_{11}^{\mathrm{pref}}(\optvar)
        &
        \bm{\Sigma}_{12}^{\mathrm{pref}}(\optvar)
        \\
        \bm{\Sigma}_{20}^{\mathrm{pref}}(\optvar)
        &
        \bm{\Sigma}_{21}^{\mathrm{pref}}(\optvar)
        &
        \bm{\Sigma}_{22}^{\mathrm{pref}}(\optvar)
    \end{bmatrix}.
    \label{eq:full_cov}
\end{equation}
We will now build our family of local PBO algorithms based on this joint predictive formulation. %

\section{Local Search Mechanisms for Preferential Optimization}
\label{sec:method}

Having revisited the pairwise GP framework and derived its predictive distributions, we now address the central question of this paper:
\textit{how can we leverage local BO mechanisms to efficiently solve high-dimensional preferential optimization problems?}
For this, we transfer three established local BO strategies to the preference setting.
Each represents a different approach to local search in black-box optimization.
For each method, we first discuss the standard BO algorithm it builds upon and then detail the adaptations required for pairwise feedback.

\subsection{GIPBO: Gradient-based Preferential Bayesian Optimization}
\label{sec:gipbo}

The first class of local BO algorithms that we reinterpret for the local PBO setting is gradient-informed search.
Our algorithm GIPBO extends GIBO \cite{muller2021local}, which uses the GP surrogate to actively learn the gradients and performs first-order gradient steps,
to pairwise preferential feedback.

\paragraph{Gradient Information with Bayesian Optimization}
GIBO \cite{muller2021local} and its variants \citep{nguyen2022local,tang_cages_2024, he2025simulation,fan_minimizing_2024, von_rohr_local_2024, ip2025user} leverage the fact that the derivative of a GP is also a GP \cite{rasmussen2006gps}.
At iteration $t$, GIBO selects $M = d$ new query points to estimate the gradient at a current vector $\optvar_t$ by maximizing the expected reduction in uncertainty of the GP's Jacobian via the Gradient Information (GI) acquisition function:
\begin{equation}
  \begin{aligned}
    \alpha_{\mathrm{GI}}(\optvar\mid \optvar_t, \mathcal{D}) = \mathbb{E}\Bigl[\,
         \Tr\bigl(\bm{\Sigma}_{11}(\optvar_t\mid \mathcal{D})\bigr) - \Tr\bigl(\bm{\Sigma}_{11}(\optvar_t\mid \{\mathcal{D},(\optvar,y)\})\bigr),
      \Bigr].
  \end{aligned}
  \label{eq:GI_acf}
\end{equation}
where $\bm{\Sigma}_{11}$ is the Jacobian covariance analogous to \eqref{eq:full_cov}.
Because a GP's covariance is independent of observed values, this criterion reduces to comparing Jacobian variances before and after adding $\optvar$ to the dataset.
$\bm{\Sigma}_{11}$ thus only depends on the virtual dataset $\hat{\bm{X}} = [\bm{X}, \optvar]$, where $X \subset \mathbf{\Theta}$. At $\optvar_t$, it is
\begin{equation}
       \bm{\Sigma}_{11}(\optvar_t \mid [\bm{X}, \optvar]) = \nabla^2_{\optvar_t} K(\optvar_t, \optvar_t)
- \nabla_{\optvar_t} K(\optvar_t, \hat{\bm{X}}) 
(\bm{K}_{\hat{\bm{X}}\hat{\bm{X}}} + \sigma_{\text{noise}}^2 I)^{-1} \nabla_{\optvar_t} K(\hat{\bm{X}}, \optvar_t). 
\label{eq:Jac_variance}
\end{equation}
After collecting $M$ such points, the GP posterior of $\nabla f$ is updated and a gradient step $\optvar_{t+1} = \optvar_t + \eta \cdot \bm{p}_t$ with step length $\eta$ and with $\bm{p}_t=\mathbb{E}[\nabla f(\optvar_t)]$ is performed. %

\paragraph{Extension to Preferential Feedback}
We now extend GIBO to pairwise preferential feedback. %
Given two initial samples, we take the preferred one as the starting point.
Each iteration repeats: \textit{(i)}~estimate the gradient at the current point via $M$ experiments; \textit{(ii)}~move one step along this estimate.
We repeat these steps until the sample budget is empty.
The further discussion and the full algorithm are presented in Appendix~\ref{sec:GIPBO_implementation_details} and Algorithm~\ref{alg:gipbo}.

Naively reusing the GI acquisition function \eqref{eq:GI_acf} for PBO fails because the Jacobian covariance of the pairwise GP $\bm{\Sigma}_{11}^{\mathrm{pref}}$ at $\optvar_t$ no longer depends solely on the virtual dataset $\hat{\bm{X}}$. Instead, the pairwise GP's Jacobian variance following \eqref{eq:general_covariance_block} is
\begin{equation}
\label{eq:PairwiseGP_Jac_variance}
\bm{\Sigma}_{11}^{\mathrm{pref}} = \nabla^2_{\optvar_t}K(\optvar_t,\optvar_t)
- \nabla_{\optvar_t}K(\optvar_t,\hat{\bm{X}})\,
\bigl(\bm{K}_{\hat{\bm{X}}\hat{\bm{X}}}+ \highlightCt^{-1}\bigr)^{-1}
\,\nabla_{\optvar_t}K(\hat{\bm{X}},\optvar_t)\,.
\end{equation}
where the matrix $\highlightCt$ would now depend on the outcome of a duel $(\optvar_t, \optvar)$.
This forces us to consider two cases; the virtual data point being preferred over the current parameter, or vice versa; and take a conditional expectation based on the surrogate model, computed via \eqref{eq:likelihood}.
The preferential gradient-information acquisition function is therefore
\begin{equation}
    \begin{aligned}
        \alpha_{\mathrm{GI\text{-}PBO}}(\optvar \mid \optvar_t, \mathcal{D}) 
        =  \Tr \big( \bm{\Sigma}_{11}^{\mathrm{pref}}(\optvar_t \mid \mathcal{D}) \big) 
        &- P(\textcolor{compare1}{\optvar_t \succ \optvar}) 
        \Tr \big( 
            \bm{\Sigma}_{11}^{\mathrm{pref}}
            (\optvar_t \mid \hat{\mathcal{D}}_{\textcolor{compare1}{\optvar_t \succ \optvar}}) 
        \big) \\
        &- P(\textcolor{compare2}{\optvar \succ \optvar_t})
        \Tr \big( 
            \bm{\Sigma}_{11}^{\mathrm{pref}}
            (\optvar_t \mid \hat{\mathcal{D}}_{\textcolor{compare2}{\optvar \succ \optvar_t}})
        \big)
    \end{aligned}
    \label{eq:GI_PBO_acf_detailed}
\end{equation}
where $\hat{\mathcal{D}}_{\textcolor{compare1}{\optvar_t \succ \optvar}} := \{\mathcal{D}, ((\optvar_t, \optvar), \hat\pi = 0)\}$ denotes the virtual dataset for $\optvar_t$ being preferred over $\optvar$ and $\hat{\mathcal{D}}_{\textcolor{compare2}{\optvar \succ \optvar_t}}$  the opposite case.
To obtain the next query $\optvar_{i+1}$, we maximize \eqref{eq:GI_PBO_acf_detailed}.
Analogous to GIBO, we optimize the acquisition function inside search bounds around the current $\optvar_t$.
To account for discernibility limits \cite{luce1956semiorders}, we also use a lower search bound.
Due to the dependence on $\cthingy$, we have to apply a grid-based optimization strategy by selecting points from various $d$-dimensional hyperspheres around the current parameters $\optvar_t$.
Each virtual comparison for each of the grid points requires refitting the model, which makes the acquisition function very expensive.
For high-dimensional settings, we therefore use GIPBOr, where points inside these search bounds are suggested randomly.

During gradient estimation, GIPBO maintains a local incumbent initialized at the current center point, $\optvar_{t,0}^{\star}=\optvar_t$. 
For the $i$-th local query $\optvar_{t,i}$, the method observes a preference between $\optvar_{t,i}$ and the current incumbent $\optvar_{t,i-1}^{\star}$, then updates
\begin{equation}
    \optvar_{t,i}^{\star}
    =
    \begin{cases}
        \optvar_{t,i}, & \text{if } \optvar_{t,i} \succ \optvar_{t,i-1}^{\star}, \\
        \optvar_{t,i-1}^{\star}, & \text{otherwise}.
    \end{cases}
\end{equation}
The preferential GP is updated after each duel, and the acquisition process is repeated for $M=d$ local queries. GIPBO then estimates the local gradient direction and takes a step along $\bm{p}_t = \boldsymbol{\mu}_{\nabla f}^{\mathrm{pref}}(\optvar_t)$ to the next center point $\optvar_{t+1}$, without performing an additional preference comparison for this step. The next iteration starts a new gradient-estimation phase centered at $\optvar_{t+1}$.

\subsection{PrefSQP: Preference-based Sequential Quadratic Programming}
\label{sec:PBSQP}
Building on the gradient-based philosophy of GIPBO, we build \prefsqp, which additionally exploits the posterior mean Hessian $\bm{\mu}_{\nabla^2 f}^{\mathrm{pref}}(\optvar_t)$ derived in Section~\ref{sec:pref_modeling}.
Concretely, it adapts BayeSQP~\cite{brunzema_bayesqp_2026} to the preference setting: scalar observations are replaced by pairwise comparisons, and a quadratic subproblem defined by the posterior mean gradient and Hessian of the pairwise GP determines the search direction.
To our knowledge, \prefsqp is the first method to incorporate second-order information from a preference surrogate into the optimization step.

\paragraph{Local Sub-Sampling via Pairwise Comparisons}
At each iteration, BayeSQP draws $M = d$ points uniformly from a $d$-dimensional ball of radius $\varepsilon$ centered at the current iterate $\optvar_t$.
In the standard (scalar-observation) setting, each sample produces a noisy function value.
\prefsqp instead compares every sampled point $\optvar_i$ to the current iterate, yielding a binary preference $\pi_i \in \{0,1\}$.
All $M$ comparisons $\{(\optvar_t, \optvar_i), \pi_i\}_{i=1}^{M}$ are then used to update the pairwise GP. %

\paragraph{Subproblem Formulation}
Following \citet{brunzema_bayesqp_2026}, we adopt the expected-value formulation of the BayeSQP subproblem.
Since our objective is to maximize the latent utility \(f\), \prefsqp seeks the local ascent direction
\begin{equation}
    \bm{p}_t \in \argmax_{\bm{p} \in \mathbb{R}^d}
    \quad
        \mathbb{E} \left[
            \frac{1}{2}\bm{p}^\top \nabla^2 f(\optvar_t)\bm{p}
            + \nabla f(\optvar_t)^\top \bm{p}
            + f(\optvar_t)
        \right]
    \label{eq:prefsqp_subproblem}
\end{equation}
We prefer this formulation over the uncertainty-aware variant of BayeSQP because the Laplace approximation underlying the pairwise GP already limits the fidelity of the posterior covariances, which can undermine robustness mechanisms such as the second-order cone program.
Moreover, the expected-value formulation performs strongly for unconstrained objectives in the scalar setting~\cite{brunzema_bayesqp_2026}.
Since $\bm{p}$ is deterministic, the expectation reduces to the posterior quantities
$\bm{H}_t \coloneqq \boldsymbol{\mu}_{\nabla^2 f}^{\mathrm{pref}}(\optvar_t)$
and
$\boldsymbol{\mu}_{\nabla f}^{\mathrm{pref}}(\optvar_t)$.
When $\bm{H}_t \prec 0$,\footnote{In practice, we follow \citet{brunzema_bayesqp_2026} and enforce this condition via an eigenvalue modification.}
the expected quadratic problem \eqref{eq:prefsqp_subproblem} has the closed-form maximizer
\begin{equation}
    \bm{p}_t
    =
    -\bm{H}_t^{-1}\boldsymbol{\mu}_{\nabla f}^{\mathrm{pref}}(\optvar_t)
    =
    -\left[
        \nabla_{\optvar_t}^2 K(\optvar_t,\bm{X})
        \bm{K}_{\bm{XX}}^{-1}
        \hat{\bm{f}}
    \right]^{-1}
    \nabla_{\optvar_t}K(\optvar_t,\bm{X})
    \bm{K}_{\bm{XX}}^{-1}
    \hat{\bm{f}},
    \label{eq:newton_direction}
\end{equation}
which is the classic Newton-type ascent direction obtained via a single linear solve.

\paragraph{Line Search via Pairwise Comparisons}
Given $\bm{p}_t$, a line search can be performed along the segment $\{\optvar_t + \eta\, \bm{p}_t \mid \eta \in [0, 1]\}$.
In Appendix~\ref{app:ablations}, we compare several strategies including Thompson sampling (TS) and EUBO~\cite{astudillo2023qEUBO} on the line as well as taking the full Newton step ($\eta = 1$).
All approaches yield comparable performance in our setting with preference feedback.
We therefore use TS as the line search method \prefsqp in all experiments. %

\paragraph{On Approximation Quality of the Hessian Prediction}
Whether second-order information from a Laplace-approximated pairwise GP is reliable enough to be useful in a downstream algorithm is not obvious.
Unlike GIPBO, where the \eqref{eq:GI_PBO_acf_detailed} actively reduces posterior gradient uncertainty at the current iterate, \prefsqp reads off the Hessian from a locally fitted pairwise GP without explicitly targeting curvature accuracy.
The following identity relates the error induced by replacing the exact posterior mean with the Laplace MAP estimate to the resulting gradient and Hessian predictions.
\begin{lemma}[Laplace approximation error]
\label{prop:approx_error}
Let $\bar{\bm{f}} := \mathbb{E}_{p(\bm{f}|\mathcal{D})}[\bm{f}]$ denote the exact posterior mean and $\hat{\bm{f}}$ the MAP estimate, and define the MAP bias $\bm{\delta} := \bar{\bm{f}} - \hat{\bm{f}}$.
Recall that
\(\mathcal{L}_{1,\optvar}f=\nabla f(\optvar)\),
\(\mathcal{L}_{2,\optvar}f=\operatorname{vec}\{\nabla^2 f(\optvar)\}\),
and
\(\bm{k}_{a}(\optvar,\bm{X})=\mathcal{L}_{a,\optvar}K(\optvar,\bm{X})\).
Then, for \(k\in\{1,2\}\),
\begin{equation}
    \boldsymbol{\mu}_{\mathcal{L}_k f}^{\mathrm{exact}}(\optvar)
    -
    \boldsymbol{\mu}_{\mathcal{L}_k f}^{\mathrm{pref}}(\optvar)
    =
    \bm{k}_{k}(\optvar,\bm{X})\,
    \bm{K}_{\bm{XX}}^{-1}\,
    \bm{\delta}.
    \label{eq:grad_hess_error_identity}
\end{equation}
\end{lemma}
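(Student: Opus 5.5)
The plan is to show that both predictive means are the same fixed linear functional applied to a vector of latent values: once to the exact posterior mean $\bar{\bm f}$, once to the MAP estimate $\hat{\bm f}$. The identity \eqref{eq:grad_hess_error_identity} then follows by subtraction. The only modelling fact needed is the structure of the pairwise GP. The likelihood \eqref{eq:likelihood} depends on the latent function only through $\bm f = f(\bm X)$, so conditioning on $\mathcal D$ acts only on the finite-dimensional vector $\bm f$. Given $\bm f$, the rest of the process, and hence every derivative functional $\mathcal{L}_{k,\optvar} f$, is conditionally independent of $\mathcal D$.

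\emph{Step 1 (exact mean).} For a sufficiently differentiable GP prior, $(\mathcal{L}_{k,\optvar} f, \bm f)$ is jointly Gaussian under the prior with cross-covariance $\bm{k}_{k}(\optvar,\bm X)$. Standard Gaussian conditioning then gives
\[
\mathbb{E}[\mathcal{L}_{k,\optvar} f \mid \bm f] = \bm{k}_{k}(\optvar,\bm{X})\bm{K}_{\bm{XX}}^{-1}\bm f .
\]
By the conditional independence noted above, the tower property yields
\[
\boldsymbol{\mu}^{\mathrm{exact}}_{\mathcal{L}_k f}(\optvar) = \mathbb{E}_{p(\bm f\mid\mathcal D)}\bigl[\bm{k}_{k}(\optvar,\bm{X})\bm{K}_{\bm{XX}}^{-1}\bm f\bigr].
\]
By linearity of expectation this equals $\bm{k}_{k}(\optvar,\bm X)\bm{K}_{\bm{XX}}^{-1}\bar{\bm f}$. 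This step holds no matter how non-Gaussian $p(\bm f\mid\mathcal D)$ is, because only its first moment enters.

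\emph{Step 2 (Laplace mean).} Under $q(\bm f\mid\mathcal D)$, the same tower argument gives $\bm{k}_{k}(\optvar,\bm X)\bm{K}_{\bm{XX}}^{-1}\hat{\bm f}$. This agrees with the expressions for $\boldsymbol{\mu}^{\mathrm{pref}}_{\nabla f}$ and $\boldsymbol{\mu}^{\mathrm{pref}}_{\nabla^2 f}$ stated in Section~\ref{sec:pref_modeling}, after applying $\operatorname{vec}$ for $k=2$. Subtracting the two expressions and using $\bm\delta = \bar{\bm f}-\hat{\bm f}$ gives the claim for $k\in\{1,2\}$. The same argument also covers $k=0$.

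The main obstacle is justifying the interchanges in Step 1, not the algebra. Two things need checking. First, the differentiated kernel blocks must exist, which requires $k\in C^{4}$ for the Hessian case; I would state this as the standing differentiability assumption. Second, $\bar{\bm f}$ must be finite and Fubini must apply. The posterior is a probit likelihood bounded by $1$ times a Gaussian prior, so all its moments are finite, and the tower property is legitimate. Invertibility of $\bm K_{\bm{XX}}$ is needed as well, but it is implicitly assumed throughout the paper.
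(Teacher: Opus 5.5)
Your proposal is correct and follows the paper's argument. Under any posterior over $\bm{f}$, the predictive mean is the fixed linear map $\bm{k}_{k}(\optvar,\bm{X})\bm{K}_{\bm{XX}}^{-1}$ applied to that posterior's mean (by the tower property and the prior conditional given $\bm{f}$), so subtracting the exact and Laplace versions gives the identity. The one difference is that you condition the jointly Gaussian pair $(\mathcal{L}_{k,\optvar}f,\bm{f})$ directly, whereas the paper first computes $\mu_f^{\mathrm{exact}}$ and then interchanges $\mathcal{L}_{k,\optvar}$ with the posterior expectation; your bounded-probit-likelihood argument also proves the finite-moment condition that the paper assumes in Assumption~\ref{asmp:posterior_integrability}.
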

\vspace{-1em}
\begin{proof}[Proof sketch]
By the law of iterated expectations, the exact predictive mean of $\mathcal{L}_{k,\optvar}f$ under the true (non-Gaussian) posterior is
$\boldsymbol{\mu}_{\mathcal{L}_k f}^{\mathrm{exact}}(\optvar)
=
\bm{k}_{k}(\optvar,\bm{X})\,\bm{K}_{\bm{XX}}^{-1}\,\bar{\bm{f}}$,
while the Laplace approximation replaces $\bar{\bm{f}}$ by $\hat{\bm{f}}$, giving
$\boldsymbol{\mu}_{\mathcal{L}_k f}^{\mathrm{pref}}(\optvar)
=
\bm{k}_{k}(\optvar,\bm{X})\,\bm{K}_{\bm{XX}}^{-1}\,\hat{\bm{f}}$.
The full proof is in Appendix~\ref{app:laplace_bound}.
\end{proof}
The identity~\eqref{eq:grad_hess_error_identity} shows that the gradient and Hessian errors are driven by the same MAP bias $\bm{\delta}$, but propagated through different operators $\bm{k}_{k}(\optvar,\bm{X})\,\bm{K}_{\bm{XX}}^{-1}$.
For the Hessian ($k=2$), this operator involves second derivatives of the kernel and can therefore be more sensitive to the kernel lengthscale, the conditioning of $\bm{K}_{\bm{XX}}$, and the local geometry of the sampled points than the gradient case ($k=1$).
Thus, the availability of a posterior mean Hessian does not by itself imply that the resulting search direction is reliable.
Whether the approximate curvature information is beneficial is therefore an empirical question; in Section~\ref{sec:results}, we find that it can be useful in practice.

\subsection{TuRPBO: Trust-Region Preferential Bayesian Optimization}
\label{sec:turpbo}
Lastly, we construct TuRPBO (pronounced ``turpe\underline{b}o''), as a preferential analogue of the popular TuRBO algorithm \cite{eriksson2019scalable}.
Unlike the gradient-based methods above, TuRPBO does not incorporate explicit higher-order information into the search.
Instead, it restricts the acquisition function to an adaptive trust region that expands after successful queries and contracts after failures, making it particularly straightforward to transfer to the preferential setting.
Within each trust region, we use either Thompson sampling \cite{eriksson2019scalable} or qEUBO \cite{astudillo2023qEUBO} as the acquisition function.

The key transfer from TuRBO to TuRPBO is the definition of a successful step.
In standard BO, a trust-region step is successful if a newly observed scalar value improves upon the best value observed so far.
In TuRPBO, scalar values are unavailable; instead, a candidate $\optvar_{t,i}$ is considered successful if it is preferred to the current best point, \ie if $\optvar_{t,i} \succ \optvar_t^*$.
For a batch $B_t=\{\optvar_{t,1},\ldots,\optvar_{t,q}\}$, the iteration is therefore successful if at least one candidate satisfies $\optvar_{t,i} \succ \optvar_t^*$; otherwise, $\optvar_t^*$ is left unchanged and the iteration is counted as a failure.
The trust region is then expanded after consecutive successes and contracted after consecutive failures, exactly as in TuRBO.
Full details are in Appendix~\ref{app:turpbo_full}.
This makes TuRPBO the most direct local-BO adaptation presented here: the trust-region geometry, lengthscale-based rescaling, and expansion--contraction rules remain unchanged, while the scalar improvement test is replaced by a preference-based one.

\section{Results}
\label{sec:results}

We benchmark our family of local PBO methods against standard global approaches to determine the settings in which local search is beneficial under preference-based feedback.
As baselines we include hallucination believer (HB-EI)~\cite{takeno2023towards}, qEUBO~\cite{astudillo2023qEUBO}, GLISp~\cite{bemporad2021global}, and pseudo-random Sobol sampling. 
We report online best-seen performance, $f_{\mathrm{best}, k}=\max_{k\leq K} f_k,$ as well as cumulative performance $F_k=\sum_{k=0}^{K} f_k$ over function evaluations.
The latter captures the quality of the full optimization trajectory and is therefore important in controller tuning and policy search problems, where poor intermediate evaluations can be costly.
Note that the algorithms only get comparisons of noisy function values \eqref{eq:noise}. We repeat each experiment 10 times with a new initialization.
The added noise is Gaussian with a standard deviation \SI{10}{\percent} of the approximated function value range. All methods are evaluated with matched budgets and the same initial data. %
Full experimental details are provided in Appendix~\ref{app:experiments}.
We further include ablations in Appendix~\ref{app:ablations}.

\paragraph{Within-Model and Out-of-Model Comparisons}
\label{sec:within}

We first study the controlled setting of GP sample-path problems in dimensions $8$ to $96$ with squared-exponential kernels ($\ell=0.1$ and $\ell=0.5$), under both within-model (known hyperparameters) and out-of-model (learned hyperparameters) settings.
Figure~\ref{fig:gp_comp} reports best-seen performance after $2+10d$ evaluations.
Local methods excel in high-dimensional, short-lengthscale problems, where global acquisition functions become inefficient; PrefSQP performs particularly well here, while TuRPBO is the most robust overall.
For very smooth functions, locality is less beneficial, and with only two initial points, local methods can get trapped in poor local optima and underperform global baselines. Contrary to the derivative-based local PBO approaches, the wide initial trust region of TuRPBO allows it to escape a bad initialization and then commit to a good local region, highlighting that combined local approaches are interesting future work (\cf Sec.~\ref{sec:limitations}).
The out-of-model comparison further reveals that qEUBO and GIPBO are sensitive to hyperparameter learning. %
HB-EI and GIPBO are evaluated only up to $32$ dimensions due to computational cost.
See Appendix~\ref{app:gp_experiments} for full optimization curves.

\paragraph{Evaluation on Standard Benchmarks} %
\label{sec:standard_benchmarks}

We next evaluate on standard benchmarks starting from two random initial points and optimizing for $10d$ iterations.
Figure~\ref{fig:synthetic} reports best-seen and cumulative performance, confirming the main trend from the GP study: local methods become more useful as dimensionality and landscape complexity increase, achieving better cumulative performance by exploiting promising regions early rather than globally exploring the full search space.
PrefSQP consistently outperforms GIPBO, indicating that second-order posterior information is more effective for local refinement than estimating gradients alone despite the Laplace-approximate GP posterior, and is mostly competitive with TuRPBO, outperforming it in the highest-dimensional setting.
The main exception is Hartmann, where TuRPBO performs best, consistent with the results on smoother objectives from Sec.~\ref{sec:within}.
Table~\ref{tab:compute-time} reports wall-clock time: Sobol and GLISp are inexpensive, while HB-EI, and GIPBO become costly as dimension increases.
qEUBO, PrefSQP, and TuRPBO remain feasible also in high dimensions.
It also highlights a trade-off: derivative-informed PBO can improve sample efficiency, but not all derivative-informed methods scale equally well computationally.
\begin{figure}[t]
    \centering
    \includegraphics[width=\linewidth, trim=0 0.1cm 0 0.25cm, clip]{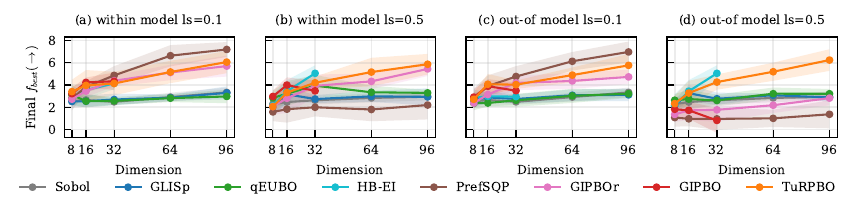}
    \vspace{-1em}
    \caption{\textit{Performance on GP sample paths.} We plot the final performance (mean $\pm$ stdv) after $10d$ iterations.}
    \vspace{-1em}
    \label{fig:gp_comp}
\end{figure}

\begin{table}[t]
\centering
\caption{Compute time in seconds per algorithm across benchmark functions (mean $\pm$ stdv).}
\resizebox{\textwidth}{!}{%
\begin{tabular}{lcccccccc}
\toprule
& & \multicolumn{3}{c}{Global methods} & \multicolumn{4}{c}{Local methods} \\
\cmidrule(lr){3-5}\cmidrule(lr){6-9}
Function & Sobol & GLISp & qEUBO ($q=1$) & HB-EI & GIPBO & GIPBOr & TuRPBO & PrefSQP \\
\midrule
Hartmann 6D & 0.023 $\pm$ 0.003 & 9.0 $\pm$ 5.2 & 33 $\pm$ 5 & 175 $\pm$ 17 & 245 $\pm$ 23 & 2.6 $\pm$ 0.8 & 46 $\pm$ 8 & 22 $\pm$ 19 \\
Rosenbrock 16D & 0.059 $\pm$ 0.004 & 7.5 $\pm$ 0.5 & 171 $\pm$ 83 & 716 $\pm$ 40 & 2453 $\pm$ 305 & 4.8 $\pm$ 2.3 & 143 $\pm$ 9 & 66 $\pm$ 10 \\
Levy 32D & 0.14 $\pm$ 0.01 & 20 $\pm$ 2 & 707 $\pm$ 302 & 7550 $\pm$ 685 & 15556 $\pm$ 2408 & 12 $\pm$ 8 & 520 $\pm$ 117 & 553 $\pm$ 394 \\
StyblinskiTang 64D & 0.29 $\pm$ 0.03 & 94 $\pm$ 7 & 4921 $\pm$ 2567 & -- & -- & 34 $\pm$ 8 & 3162 $\pm$ 592 & 2918 $\pm$ 538 \\
\bottomrule
\end{tabular}
}
\label{tab:compute-time}
\vspace{-1em}
\end{table}

\paragraph{Policy Search}
\label{sec:results_rl}
\begin{figure}
    \centering
    \includegraphics[width=\textwidth, trim=0.1cm 0.2cm 0 0.26cm, clip]{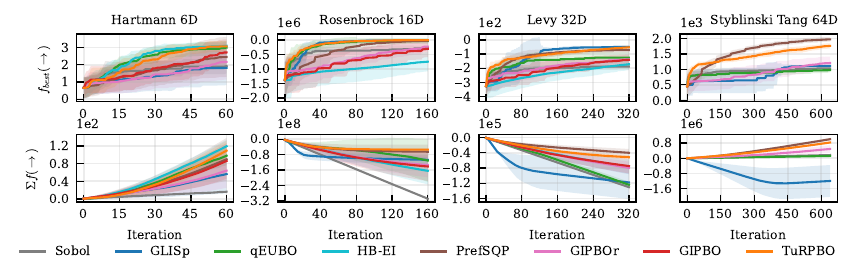}%
    \vspace{-1em}
    \caption{\textit{Performance on synthetic benchmarks.} Best (top) and cumulative (bottom) performance (mean $\pm$ stdv).
    Our local PBO methods achieve higher cumulative performance in high-dimensional settings.}
    \label{fig:synthetic}
    \vspace{-1em}
\end{figure}

\begin{figure}
    \centering
    \includegraphics[width=\textwidth, trim=0 0.2cm 0 0.26cm, clip]{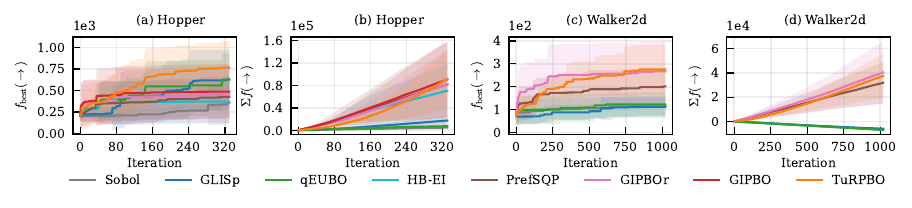}
    \vspace{-2em}
    \caption{\textit{Performance on policy-search tasks Hopper and Walker2D.}}
    \label{fig:rl_bench}
    \vspace{-2em}
\end{figure}

Lastly, we evaluate the baselines on MuJoCo Hopper ($33\text{-}d$) and Walker2D ({$102\text{-}d$}) policy-search tasks, using linear feedback policies following~\cite{horia2018simple}, where each iteration is one full episode rollout.
Each run is warm-started with $5d$ random evaluations; the best observed policy initializes the preference model, and each method runs for $10d$ iterations, simulating a real-world setting with a crude initial policy.
Figure~\ref{fig:rl_bench} shows that TuRPBO achieves the strongest final performance in both tasks.
Overall, our local PBO methods obtain higher cumulative performance than global baselines by avoiding low-return evaluations.
GLISp is competitive in final performance but has substantially worse cumulative performance, which can be problematic in practice.
Non-warm-started results in Appendix~\ref{app:policy_details} show the complementary failure mode: without a promising initial region, purely local methods are less reliable.
Thus, local PBO is most useful for high-dimensional policy tuning when prior knowledge or an existing controller provides a reasonable starting region.

\section{Discussion on Limitations and Future Work}
\label{sec:limitations}

While our results show that local search mechanisms can improve PBO in high-dimensional settings, some limitations remain. First, we studied the proposed local mechanisms separately to isolate their individual strengths and weaknesses. In practice, however, combinations of global search or warm-starting with a subsequent local gradient-based refinement \cite{DELCARO2026106891} may lead to further performance improvements as we saw that given a good initial guess, these local PBO approaches were highly effective. Designing such hybrid methods is an important direction for future work. Second, our evaluations build on synthetic objectives and policy-search problems.
Although these benchmarks are useful for controlled comparisons, they do not fully capture the structure of human preferences. In real applications \cite{coutinho2024human, erarslan_consecutive_2025,dewancker2018sequential}, experiments and preferences may be noisier, inconsistent, context-dependent, or limited by a threshold of discernibility, where two alternatives are effectively indistinguishable to the user.
Further, some real-world problems are prone to experimental ``crashes'' \cite{menn2026preferential}, motivating a more cautious exploration through local search.
Future work should evaluate local PBO on real systems with human feedback on high-dimensional problems. Finally, the effect of alternative comparison mechanisms, such as larger comparison batches \cite{astudillo2023qEUBO}, ranking \cite{nguyen2021top}, or explicit indifference responses \cite{bemporad2021global, erarslan_consecutive_2025}
on user burden and robustness
remains an interesting direction for future work.

\section{Conclusion}
\label{sec:conclusion}

This work presented a family of local PBO methods that adapt trust-region optimization and derivative-informed refinement to pairwise preference feedback.
Our experiments show that restricting search to promising regions addresses a central scalability challenge of PBO, yielding clear gains in high-dimensional, complex landscapes.
Among the proposed methods, TuRPBO offers the most robust performance, while PrefSQP demonstrates that second-order posterior information from a Laplace-approximated pairwise GP can meaningfully improve local refinement.
These findings position local PBO as well suited for applications such as controller tuning and policy search, where reasonable initial solutions often exist and costly intermediate evaluations must be avoided.
However, locality is not universally beneficial: in low dimensions or without a good initial region, global methods remain competitive, pointing to hybrid global-local strategies as a natural next step.

\section*{Acknowledgments}
This work was funded in part by the Deutsche Forschungsgemeinschaft (DFG, German Research Foundation) under Germany's Excellence Strategy -- EXC-2023 Internet of Production -- 390621612, project 556142469 (ParDyBO) and RTG 2236/2 (UnRAVeL). Computations were performed using resources granted by RWTH Aachen University under Project rwth2055 and p0021919. This work was conducted in part within the Helmholtz School for Data Science in Life, Earth and Energy (HDS-LEE).)
\bibliography{neurips}

\clearpage
\appendix

\section{Broader Societal Impacts of the Developed Local PBO Algorithms}\label{app:impacts}
Our paper proposes three local PBO algorithms for solving optimization problems using only pairwise comparisons.
These methods could make optimization more practical in high-dimensional settings where numerical objectives are difficult to define but comparisons between alternatives are feasible, particularly when human judgment is involved.
By learning from such comparisons, local PBO may support the optimization of qualitative criteria such as comfort, usability, safety, or task performance, while its focus on local refinement can help reduce poor intermediate trials in applications such as controller tuning and policy search.
Potential negative impacts mainly arise from misuse or biased deployment of preference-based optimization, for example when learned preferences reflect unfair, unsafe, or manipulative objectives, or when user preferences are exploited in harmful ways.
Overall, the work is methodological and does not target a specific high-risk application; its societal impact will depend largely on how it is deployed.

\section{Proof of Lemma~\ref{prop:approx_error}}
\label{app:laplace_bound}

We restate the lemma and provide the full proof.
Recall the notation from Section~\ref{sec:pref_modeling}:
\(\mathcal{L}_{1,\optvar}f=\nabla f(\optvar)\),
\(\mathcal{L}_{2,\optvar}f=\operatorname{vec}\{\nabla^2 f(\optvar)\}\),
and
\(\bm{k}_{a}(\optvar,\bm{X})=\mathcal{L}_{a,\optvar}K(\optvar,\bm{X})\).

\begin{assumption}
\label{asmp:kernel_diff}
The covariance function $k(\optvar, \optvar')$ is at least four times differentiable in its first argument.
\end{assumption}

\begin{assumption}
\label{asmp:K_invertible}
The prior covariance matrix $\bm{K}_{\bm{XX}} = K(\bm{X},\bm{X})$ is strictly positive definite.
\end{assumption}

\begin{assumption}
\label{asmp:posterior_integrability}
The exact posterior $p(\bm{f} \mid \mathcal{D})$ satisfies $\mathbb{E}_{p(\bm{f} \mid \mathcal{D})}[\|\bm{f}\|_2] < \infty$, so that the posterior mean $\bar{\bm{f}} := \mathbb{E}_{p(\bm{f} \mid \mathcal{D})}[\bm{f}]$ is well defined and the interchange of differentiation and integration below is justified.
\end{assumption}

\begin{lemma*}[Laplace approximation error for gradient and Hessian predictions, restated]
Under Assumptions~\ref{asmp:kernel_diff}--\ref{asmp:posterior_integrability}, let $\bar{\bm{f}} := \mathbb{E}_{p(\bm{f}|\mathcal{D})}[\bm{f}]$ denote the exact posterior mean and $\hat{\bm{f}}$ the MAP estimate, and define the MAP bias $\bm{\delta} := \bar{\bm{f}} - \hat{\bm{f}}$.
Then, for any test point $\optvar \in \Theta$ and $k \in \{1,2\}$,
\begin{equation}
    \boldsymbol{\mu}_{\mathcal{L}_k f}^{\mathrm{exact}}(\optvar)
    -
    \boldsymbol{\mu}_{\mathcal{L}_k f}^{\mathrm{pref}}(\optvar)
    =
    \bm{k}_{k}(\optvar, \bm{X})\, \bm{K}_{\bm{XX}}^{-1}\, \bm{\delta}.
    \tag{\ref{eq:grad_hess_error_identity}}
\end{equation}
\end{lemma*}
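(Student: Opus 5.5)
The identity follows by computing both predictive means as the same linear map applied to a vector in $\mathbb{R}^P$, and then subtracting. Throughout, the map is $\bm{f}\mapsto \bm{k}_{k}(\optvar,\bm{X})\bm{K}_{\bm{XX}}^{-1}\bm{f}$.

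\textbf{Step 1: conditional prior mean.} Work under the GP prior. By Assumption~\ref{asmp:kernel_diff}, mean-square derivatives of order up to two exist, so $(f(\bm{X}),\mathcal{L}_{k,\optvar}f)$ is jointly Gaussian. The cross-covariance between $\mathcal{L}_{k,\optvar}f$ and $f(\bm{X})$ is $\bm{k}_{k}(\optvar,\bm{X})$, which follows by differentiating $k$ under the expectation. By Assumption~\ref{asmp:K_invertible}, $\bm{K}_{\bm{XX}}$ is invertible, so Gaussian conditioning gives
\begin{equation*}
\mathbb{E}[\mathcal{L}_{k,\optvar}f \mid \bm{f}] = \bm{k}_{k}(\optvar,\bm{X})\bm{K}_{\bm{XX}}^{-1}\bm{f}.
\end{equation*}

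\textbf{Step 2: conditional independence.} The likelihood \eqref{eq:likelihood} depends on $f$ only through $\bm{f}$. Hence, given $\bm{f}$, the latent process at the test point, including its derivatives, is independent of $\mathcal{D}$. In particular, $p(\mathcal{L}_{k,\optvar}f\mid\bm{f},\mathcal{D})=p(\mathcal{L}_{k,\optvar}f\mid\bm{f})$.

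\textbf{Step 3: exact predictive mean.} Apply the tower property with respect to the exact posterior:
\begin{equation*}
\boldsymbol{\mu}^{\mathrm{exact}}_{\mathcal{L}_k f}(\optvar)=\mathbb{E}_{p(\bm{f}\mid\mathcal{D})}\bigl[\bm{k}_{k}(\optvar,\bm{X})\bm{K}_{\bm{XX}}^{-1}\bm{f}\bigr]=\bm{k}_{k}(\optvar,\bm{X})\bm{K}_{\bm{XX}}^{-1}\bar{\bm{f}}.
\end{equation*}
Two facts justify this. The map is linear and deterministic, so it can be pulled out of the expectation. Assumption~\ref{asmp:posterior_integrability} ensures $\bar{\bm{f}}$ exists and that all expectations are finite: the conditional mean is affine in $\bm{f}$, so integrability of $\bm{f}$ suffices.

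\textbf{Step 4: Laplace predictive mean and subtraction.} The Laplace predictive means displayed after \eqref{eq:joint_derivative_predictive} are $\boldsymbol{\mu}^{\mathrm{pref}}_{\mathcal{L}_k f}(\optvar)=\bm{k}_{k}(\optvar,\bm{X})\bm{K}_{\bm{XX}}^{-1}\hat{\bm{f}}$. These arise from the same Step 1 formula averaged over $q(\bm{f}\mid\mathcal{D})$, whose mean is $\hat{\bm{f}}$. Subtracting this from Step 3 and using linearity yields \eqref{eq:grad_hess_error_identity} with $\bm{\delta}=\bar{\bm{f}}-\hat{\bm{f}}$. For $k=2$ this holds entrywise after vectorization.

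\textbf{Main obstacle.} The only delicate point is rigor in Steps 1--2. One must check that the derivative processes exist with the stated cross-covariances, which requires differentiating $k$ in both arguments. One must also justify that conditioning on $\mathcal{D}$ adds nothing beyond conditioning on $\bm{f}$. I would handle this by viewing $(\bm{f},\mathcal{L}_{k,\optvar}f)$ as a finite-dimensional Gaussian vector and writing the joint density as prior $\times$ likelihood($\bm{f}$). Integrating out then gives the conditional independence directly. The rest is linear algebra.
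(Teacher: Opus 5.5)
Your proof is correct and follows the paper's argument: the conditional mean given $\bm{f}$ is the linear map $\bm{f}\mapsto \bm{k}_{k}(\optvar,\bm{X})\bm{K}_{\bm{XX}}^{-1}\bm{f}$, averaging it over $p(\bm{f}\mid\mathcal{D})$ and over $q(\bm{f}\mid\mathcal{D})$ gives the two means, and subtracting yields the identity. The only difference is that you condition $\mathcal{L}_{k,\optvar}f$ on $\bm{f}$ directly via the joint Gaussian prior, whereas the paper first derives $\mu_f^{\mathrm{exact}}(\optvar)=K(\optvar,\bm{X})\bm{K}_{\bm{XX}}^{-1}\bar{\bm{f}}$ and then interchanges $\mathcal{L}_{k,\optvar}$ with the posterior expectation; your route avoids that interchange and makes explicit the conditional independence that the paper uses tacitly.
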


\begin{proof}
The proof proceeds in two steps: we first establish the predictive mean under an arbitrary posterior and then pass to derivatives to obtain the exact identity~\eqref{eq:grad_hess_error_identity}. %

For a GP with prior mean zero and prior covariance $k$, the conditional distribution of the function value $f(\optvar)$ at a test point $\optvar$, given the latent vector $\bm{f}$ at the training locations $\bm{X}$, is
\begin{equation}
    p(f(\optvar) \mid \bm{f})
    =
    \mathcal{N}\!\bigl(
        K(\optvar, \bm{X})\, \bm{K}_{\bm{XX}}^{-1}\, \bm{f},\;
        k(\optvar, \optvar) - K(\optvar, \bm{X})\, \bm{K}_{\bm{XX}}^{-1}\, K(\bm{X}, \optvar)
    \bigr).
    \label{eq:conditional_f}
\end{equation}
This conditional is a property of the GP prior and holds regardless of the form of the posterior $p(\bm{f} \mid \mathcal{D})$.
Taking the predictive expectation under \emph{any} posterior $p(\bm{f} \mid \mathcal{D})$:
\begin{align}
    \mathbb{E}[f(\optvar) \mid \mathcal{D}]
    &=
    \int f(\optvar)\, p(f(\optvar) \mid \bm{f})\, p(\bm{f} \mid \mathcal{D})\, d\bm{f}
    \notag \\
    &=
    \int K(\optvar, \bm{X})\, \bm{K}_{\bm{XX}}^{-1}\, \bm{f} \;\, p(\bm{f} \mid \mathcal{D})\, d\bm{f}
    \notag \\
    &=
    K(\optvar, \bm{X})\, \bm{K}_{\bm{XX}}^{-1}\, \mathbb{E}_{p(\bm{f} \mid \mathcal{D})}[\bm{f}].
    \label{eq:pred_mean_any_posterior}
\end{align}
In particular, the exact predictive mean is
\begin{equation}
    \mu_f^{\mathrm{exact}}(\optvar)
    =
    K(\optvar, \bm{X})\, \bm{K}_{\bm{XX}}^{-1}\, \bar{\bm{f}},
    \label{eq:exact_pred_mean}
\end{equation}
and the Laplace-approximate predictive mean is
\begin{equation}
    \mu_f^{\mathrm{pref}}(\optvar)
    =
    K(\optvar, \bm{X})\, \bm{K}_{\bm{XX}}^{-1}\, \hat{\bm{f}}.
    \label{eq:lap_pred_mean}
\end{equation}

Under Assumptions~\ref{asmp:kernel_diff} and~\ref{asmp:posterior_integrability}, the linear differential operators $\mathcal{L}_{k,\optvar}$ can be interchanged with the posterior expectation.
For $k = 1$ (gradient, $\mathcal{L}_{1,\optvar}f = \nabla f(\optvar)$):
\begin{align}
    \boldsymbol{\mu}_{\mathcal{L}_1 f}^{\mathrm{exact}}(\optvar)
    &=
    \mathcal{L}_{1,\optvar}\, \mu_f^{\mathrm{exact}}(\optvar)
    =
    \bm{k}_{1}(\optvar, \bm{X})\, \bm{K}_{\bm{XX}}^{-1}\, \bar{\bm{f}},
    \label{eq:exact_grad_mean}
    \\
    \boldsymbol{\mu}_{\mathcal{L}_1 f}^{\mathrm{pref}}(\optvar)
    &=
    \mathcal{L}_{1,\optvar}\, \mu_f^{\mathrm{pref}}(\optvar)
    =
    \bm{k}_{1}(\optvar, \bm{X})\, \bm{K}_{\bm{XX}}^{-1}\, \hat{\bm{f}}.
    \label{eq:lap_grad_mean}
\end{align}
For $k = 2$ (vectorized Hessian, $\mathcal{L}_{2,\optvar}f = \operatorname{vec}\{\nabla^2 f(\optvar)\}$):
\begin{align}
    \boldsymbol{\mu}_{\mathcal{L}_2 f}^{\mathrm{exact}}(\optvar)
    &=
    \bm{k}_{2}(\optvar, \bm{X})\, \bm{K}_{\bm{XX}}^{-1}\, \bar{\bm{f}},
    \label{eq:exact_hess_mean}
    \\
    \boldsymbol{\mu}_{\mathcal{L}_2 f}^{\mathrm{pref}}(\optvar)
    &=
    \bm{k}_{2}(\optvar, \bm{X})\, \bm{K}_{\bm{XX}}^{-1}\, \hat{\bm{f}}.
    \label{eq:lap_hess_mean}
\end{align}
Subtracting \eqref{eq:lap_grad_mean} from \eqref{eq:exact_grad_mean} (respectively \eqref{eq:lap_hess_mean} from \eqref{eq:exact_hess_mean}) yields, for $k \in \{1,2\}$,
\begin{equation}
    \boldsymbol{\mu}_{\mathcal{L}_k f}^{\mathrm{exact}}(\optvar)
    -
    \boldsymbol{\mu}_{\mathcal{L}_k f}^{\mathrm{pref}}(\optvar)
    =
    \bm{k}_{k}(\optvar, \bm{X})\, \bm{K}_{\bm{XX}}^{-1}\,
    (\bar{\bm{f}} - \hat{\bm{f}})
    =
    \bm{k}_{k}(\optvar, \bm{X})\, \bm{K}_{\bm{XX}}^{-1}\, \bm{\delta},
\end{equation}
which is the exact identity~\eqref{eq:grad_hess_error_identity}.

\end{proof}

\newpage
\section{Implementation Details and Full Algorithms}
\label{app:implementation}

\subsection{Overview of Implementation Choices}
All experiments are performed on normalized domains $\Theta=[0,1]^d$, and all algorithms receive only noisy pairwise comparisons rather than direct function values. The local methods use the Laplace-approximated pairwise GP surrogate introduced in Section~\ref{sec:pref_modeling} to model the latent utility from these comparisons. The following sections provide implementation details that are omitted from the main text, including acquisition optimization, local sampling strategies, step normalization, trust-region updates, and the full algorithmic procedures used in the experiments.

\subsection{GIPBO: Implementation Details}
\label{sec:GIPBO_implementation_details}

\subsubsection{Algorithmic overview}
At a high level, one GIPBO iteration consists of a local gradient-estimation phase followed by a gradient step to update the center point. Figure \ref{fig:overview_gibo} illustrates the algorithm and the full algorithm is described in Algorithm \ref{alg:gipbo}. The algorithm starts with one comparison (Figure \ref{fig:1a}). The winner of this comparison is then selected as the current center point $x_t$. Starting from this point, GIPBO sequentially selects $M=d$ local query points using the preferential GI acquisition function (Figure \ref{fig:1b}). Each query is compared against the current local incumbent, the pairwise GP is refit, and the next query is selected under the updated posterior. After the $d$ comparisons, the posterior mean gradient at $x_t$ defines a normalized ascent direction, and the next center point is obtained by a projected fixed-length step (Figure \ref{fig:1c}). The step itself is not queried immediately; it becomes the center of the next gradient-estimation phase. 
\begin{figure*}[t]
  \centering
\begin{subfigure}[t]{0.24\textwidth} \plotwithaxislabels{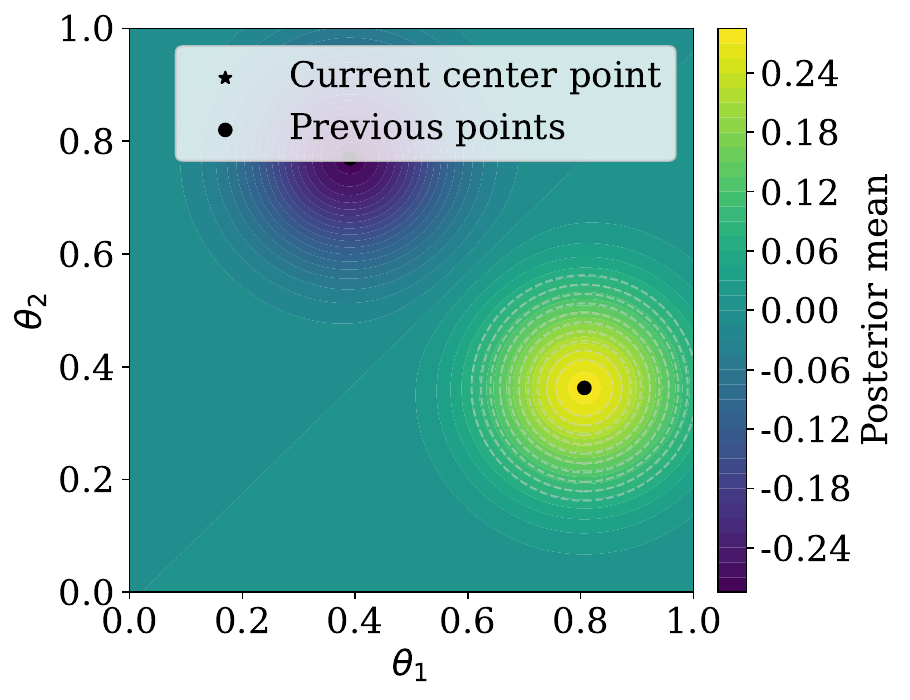}
    \caption{Initial GP posterior}
    \label{fig:1a}
  \end{subfigure}
  \hfill
  \begin{subfigure}[t]{0.24\textwidth}
    \plotwithaxislabels{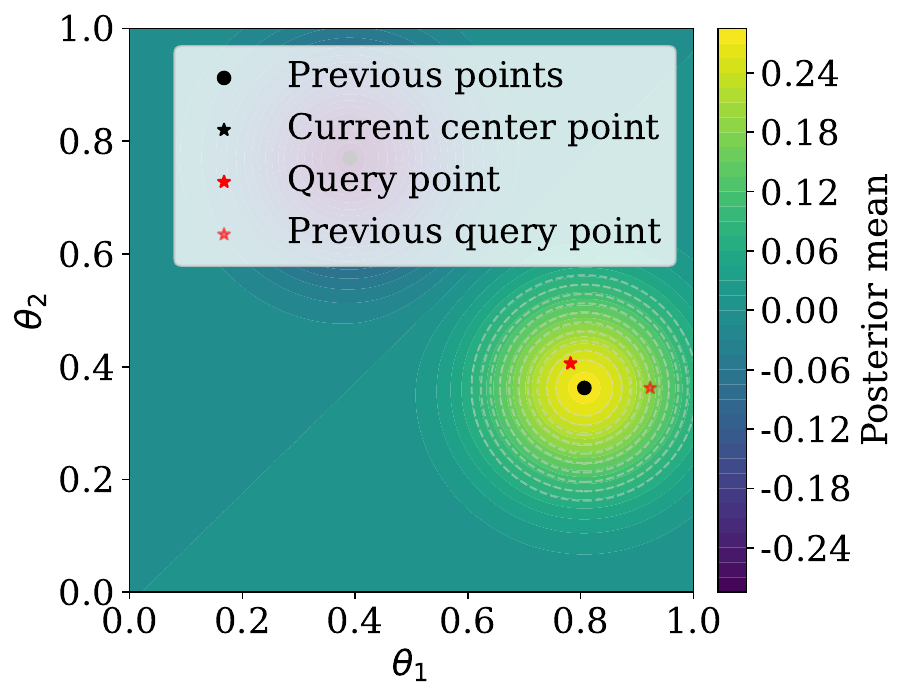}
    \caption{Gradient estimation}
    \label{fig:1b}
  \end{subfigure}
  \hfill
  \begin{subfigure}[t]{0.24\textwidth}
    \plotwithaxislabels{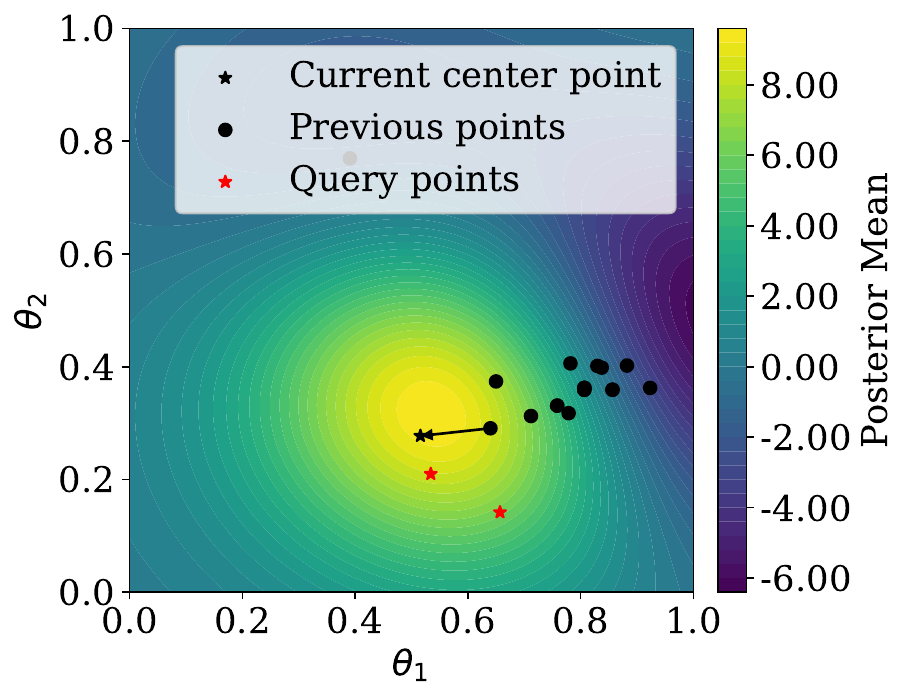}
    \caption{Gradient step}
    \label{fig:1c}
  \end{subfigure}
  \hfill
  \begin{subfigure}[t]{0.24\textwidth}
    \plotwithaxislabels{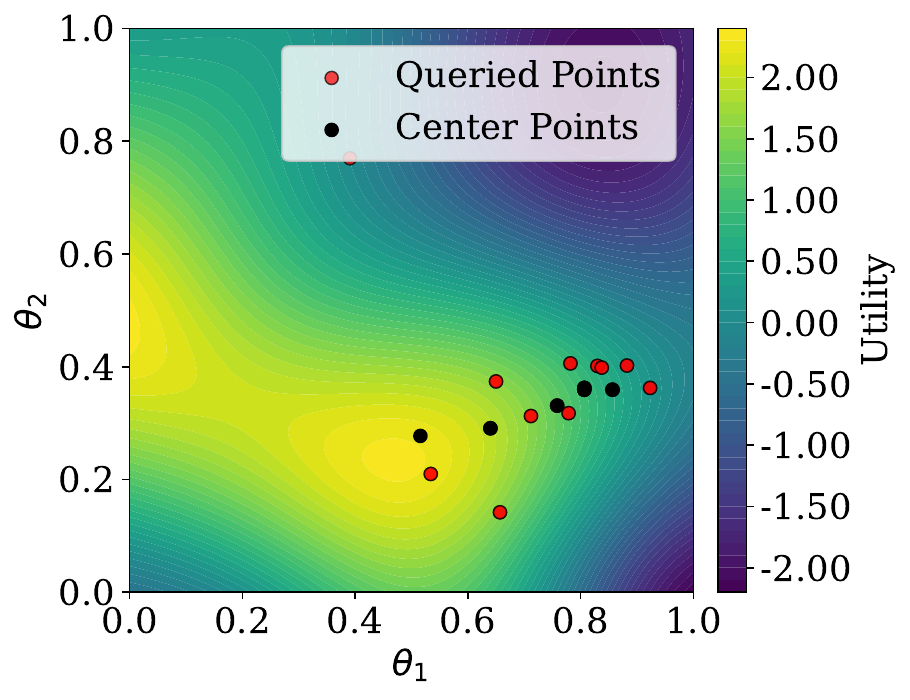}
    \caption{Result on true function}
    \label{fig:1d}
  \end{subfigure}
  
  \caption{Illustration of gradient steps in GIPBO. (a) The GP posterior is initialized with two data points. (b) A gradient estimate is obtained by querying $d$ samples (red query points) computed by the adapted GI acquisition function. (c) A gradient step is performed. (d) The result of five gradient steps is visualized on the true function's plot.
  As we do not observe function measurements but only comparisons, the output scale of the GP gets rescaled in every iteration.}
  \label{fig:overview_gibo}
\end{figure*}

\subsubsection{Preferential GI acquisition evaluation}
In contrast to GIBO, the matrix
\((\bm{K}_{\hat{\bm{X}}\hat{\bm{X}}}+\hat{\cthingy}^{-1})^{-1}\) in the preferential Jacobian covariance
has to be recomputed separately for both possible virtual preference outcomes at each candidate
\(x\). Since \(\hat C\) depends on the outcome of the duel, the efficient Cholesky updates used in
GIBO are not directly applicable. For each virtual dataset, we therefore compute
\((\bm{K}_{\hat{\bm{X}}\hat{\bm{X}}}+\hat{\cthingy}^{-1})^{-1}\) by a full Cholesky decomposition when evaluating
\(\bm{\Sigma}^{\mathrm{pref}}_{11}\).

\subsubsection{Candidate-set construction}
For the acquisition function optimization, we sample virtual candidates from multiple \(d\)-dimensional
hyperspheres centered at \(\optvar_t\). To respect discernibility limits, we enforce a minimum radius of
\(0.05\). In two dimensions, we draw 2,500 candidates from 10 concentric circles with radii spanning
the local search bounds. Figure~\ref{fig:gp_mean_acq_comparison} illustrates this sampling procedure.
For \(d>2\), we sample \(d \times r\) candidates, where the resolution is fixed to \(r=50\). To reduce
the cost compared with uniform sampling over the full local search region, we restrict the candidates
to four hyperspheres with radii \(0.05, 0.1, 0.15,\) and \(0.2\). Points on each hypersphere are generated
using the Marsaglia method \cite{marsaglia1972sphere}.
In high-dimensional experiments, we cap the number of evaluated acquisition candidates at 512 for computational reasons. This cap makes full GIPBO less reliable in high dimensions because the capped candidate set can become too sparse to approximate the acquisition maximizer accurately.
 \begin{figure}[h]
  \centering
  \setlength{\tabcolsep}{10pt}
    \begin{tabular}{cc}
    \parbox{0.39\linewidth}{\centering
      \plotwithaxislabelstwo{\linewidth}{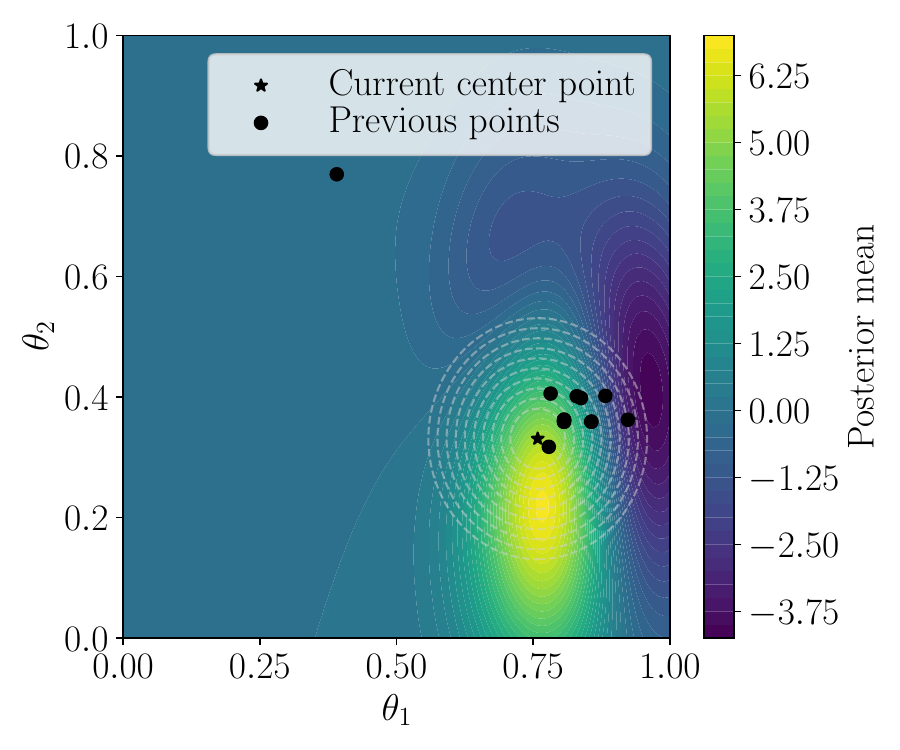}{0.45}{0.05}{0.045}{0.54}
    } &
    \parbox{0.39\linewidth}{\centering
      \plotwithaxislabelstwo{\linewidth}{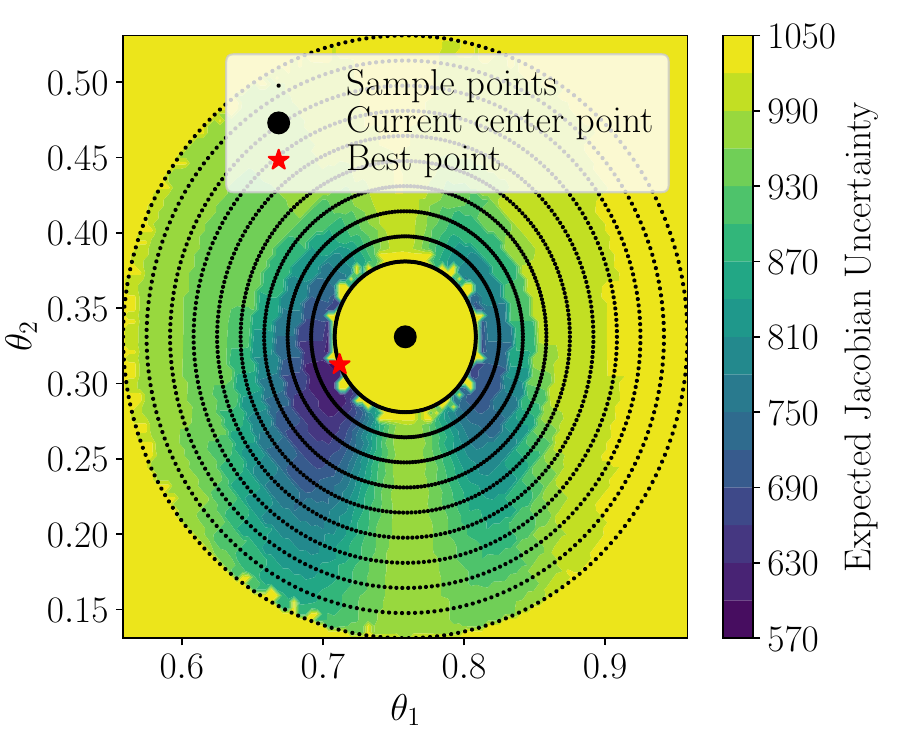}{0.45}{0.05}{0.03}{0.54}
    } \\

    \parbox{0.39\linewidth}{
      (a) GP posterior mean after the first three gradient steps. The search area for the acquisition function is displayed in light grey.
    } &
    \parbox{0.39\linewidth}{
      (b) Computation of the first query point to minimize the expected Jacobian uncertainty at the current parameters, here referred to as center point.
    } \\

    \addlinespace[4pt]

    \parbox{0.39\linewidth}{\centering
      \plotwithaxislabelstwo{\linewidth}{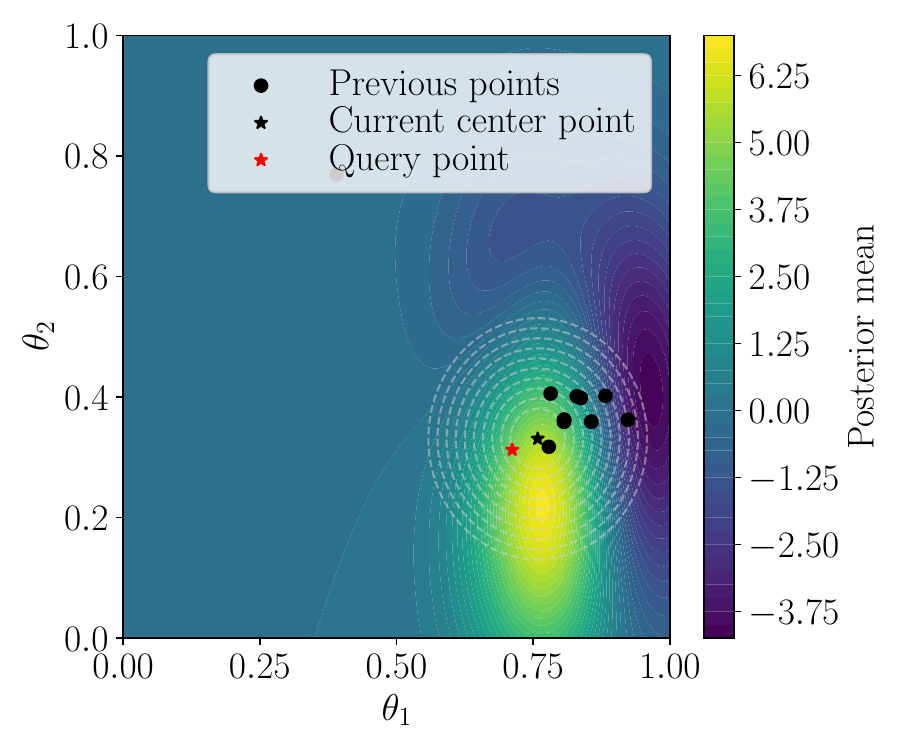}{0.45}{0.05}{0.045}{0.54}
    } &
    \parbox{0.39\linewidth}{\centering
      \plotwithaxislabelstwo{\linewidth}{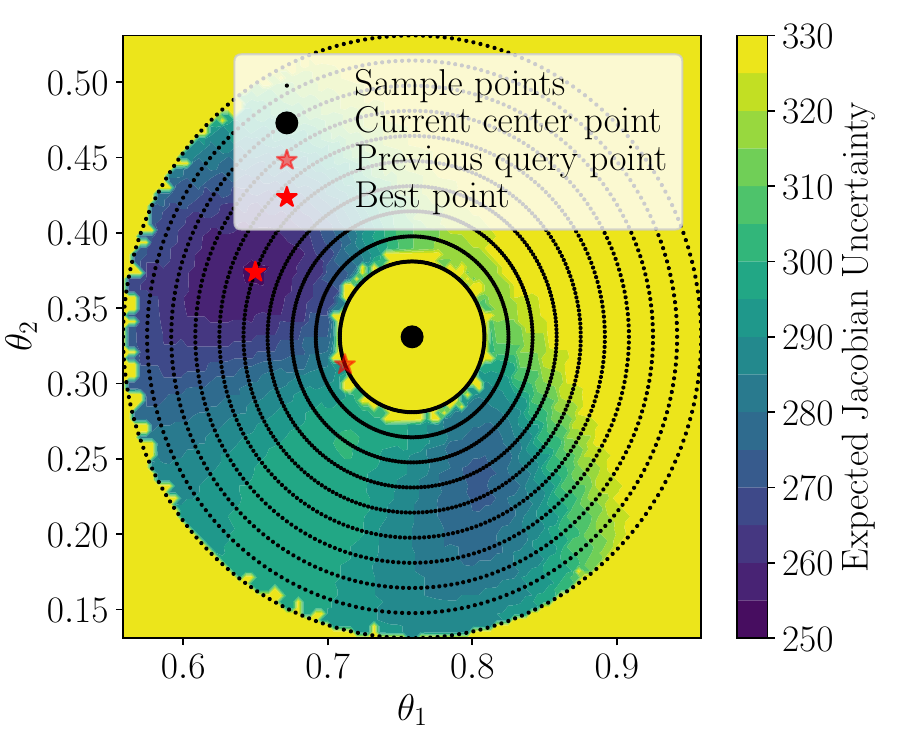}{0.45}{0.05}{0.03}{0.54}
    } \\

    \parbox{0.39\linewidth}{
      (c) Result of GP update with the duel of the center and first query point.
    } &
    \parbox{0.39\linewidth}{
      (d) Computation of the second query point needed for the gradient estimation.
    } \\
  \end{tabular}
  \caption[GP posterior mean and corresponding computation of the two query points to perform the fourth gradient step.]
  {GP posterior mean for parameters $\optvar = \{x_0, x_1\}$ and corresponding computation of the two query points to perform the fourth gradient step.}
  \label{fig:gp_mean_acq_comparison}
\end{figure}

\subsubsection{Gradient-step normalization}
We use a squared-exponential kernel on the normalized domain \(\Theta=[0,1]^d\), with
lengthscales constrained to \([0.05,0.5]\). This constraint avoids excessively large local steps,
since the gradient direction is normalized with respect to the squared-exponential lengthscales,
following \citet{muller2021local}. We set the minimum absolute step length to \(0.05\) and rescale
steps whenever they would fall below this threshold. Given the posterior mean gradient at the
current iterate \(\optvar_t\),
\[
\bm{g}_t = \bm{\mu}^{\mathrm{pref}}_{\nabla f}(\optvar_t),
\]
we use the Mahalanobis norm induced by the squared-exponential lengthscale matrix \(L\),
\[
\bm{p}_t = \frac{\bm{g}_t}{\|\bm{g}_t\|_L},
\qquad
\|\bm{g}_t\|_L = \sqrt{\bm{g}_t^\top L \bm{g}_t},
\]
and update the iterate according to \(\optvar_{t+1}=\Pi_{\Theta}(\optvar_t+\eta \bm{p}_t)\).

\subsection{GIPBOr: Randomized GIPBO Variant}
\label{app:gipbor_impl}

GIPBOr uses the same pairwise GP model, local incumbent update, and normalized gradient step as GIPBO, but replaces the expensive maximization of the preferential GI acquisition function by random local candidate selection. In each gradient-estimation phase, query points are sampled from the same local search region used by GIPBO. This variant isolates the effect of the gradient-based local update from the cost of acquisition optimization and provides a scalable approximation in higher-dimensional settings, where evaluating both virtual preference outcomes for many candidates becomes prohibitively expensive.

\subsubsection{Full algorithm}
\begin{algorithm}[H]
\caption{GIPBO}
\label{alg:gipbo}
\begin{algorithmic}[1]
\STATE Initialize data set \(\mathcal D_0\), current iterate \(\optvar_0\), step length \(\eta\), acquisition search bounds \((\delta_{\min},\delta_{\max})\)
\FOR{\(t=0,\ldots,T-1\)}
    \STATE Set local incumbent \(\optvar^\star_{t,0}\leftarrow \optvar_t\)
    \FOR{\(i=1,\ldots,M\)}
        \STATE Fit pairwise GP model using \(\mathcal D_t\)
        \STATE Select local query
        \[
            \optvar_{t,i}
            \in
            \arg\max_{\optvar \in \mathcal B(\optvar_t;\delta_{\min},\delta_{\max})\cap\Theta}
            \alpha_{\mathrm{GI\text{-}PBO}}(\optvar \mid \optvar_t,\mathcal D_t)
        \]
        \STATE Compare local query \(\optvar_{t,i}\) against local incumbent \(\optvar^\star_{t,i-1}\)
        \STATE Observe ordered comparison \((\optvar^+_{t,i},\optvar^-_{t,i})\)
        \STATE Update \(\mathcal D_t \leftarrow \mathcal D_t \cup \{(\optvar^+_{t,i},\optvar^-_{t,i})\}\)
        \IF{\(\optvar_{t,i}\) is preferred to \(\optvar^\star_{t,i-1}\)}
            \STATE \(\optvar^\star_{t,i}\leftarrow \optvar_{t,i}\)
        \ELSE
            \STATE \(\optvar^\star_{t,i}\leftarrow \optvar^\star_{t,i-1}\)
        \ENDIF
    \ENDFOR
    \STATE Fit pairwise GP model using \(\mathcal D_t\)
    \STATE Compute posterior mean gradient \(\bm{g}_t \leftarrow \bm{\mu}^{\mathrm{pref}}_{\nabla f}(\optvar_t)\)
    \STATE Normalize gradient direction \(\bm{p}_t \leftarrow \bm{g}_t / \| \bm{g}_t\|_L\)
    \STATE Update iterate \(\optvar_{t+1}\leftarrow \Pi_\Theta(\optvar_t+\eta \bm{p}_t)\)
    \STATE Set \(\mathcal D_{t+1}\leftarrow\mathcal D_t\)
\ENDFOR
\STATE \textbf{return} best observed incumbent \(\optvar^\star\)
\end{algorithmic}
\end{algorithm}

\subsection{PrefSQP: Implementation Details}
\label{app:prefsqp_impl}

PrefSQP follows the same preference-modeling setup as GIPBO but uses both the posterior mean gradient and posterior mean Hessian of the Laplace-approximated pairwise GP. At each iteration, $d$ local points are sampled in a ball of radius $\epsilon$ around the current iterate and compared to the current iterate. The resulting comparisons are added to the dataset and the pairwise GP is refit.

Given the posterior mean gradient $\bm{g}_t=\bm{\mu}^{\mathrm{pref}}_{\nabla f}(\optvar_t)$ and Hessian $\bm{H}_t=\bm{\mu}^{\mathrm{pref}}_{\nabla^2 f}(\optvar_t)$, PrefSQP constructs a local quadratic model. Since the objective is maximized, we modify the Hessian eigenvalues to enforce negative definiteness before computing the Newton-type direction. The resulting direction is used in a one-dimensional preference-based line search over $\eta\in[0,1]$. In the experiments, this line search is implemented using Thompson sampling over a fixed grid of 100 candidate step sizes.

At each iteration, we sample
$M=d$ local points from $B(\bm{x}_t,\epsilon)\cap\Theta$ and compare each of them against the current
iterate $\optvar_t$. The resulting pairwise observations are added to the data set, and the pairwise GP is
refit.

After the local comparisons, we compute
\[
    \bm{g}_t = \bm{\mu}^{\mathrm{pref}}_{\nabla f}(\optvar_t),
    \qquad
    \bm{H}_t = \bm{\mu}^{\mathrm{pref}}_{\nabla^2 f}(\optvar_t).
\]
These quantities define a local quadratic model. Since the latent utility is maximized, we require a
locally concave quadratic approximation before computing the Newton-type ascent direction. The
predicted Hessian is therefore symmetrized and modified by shifting or clipping its eigenvalues so
that the resulting matrix $\tilde H_t$ is negative definite. The search direction is then
\[
    \bm{p}_t = -\tilde{\bm{H}}_t^{-1} \bm{g}_t.
\]

To avoid always taking the full Newton-type step, PrefSQP performs a one-dimensional
preference-based line search over
\[
    \optvar_t(\eta)=\Pi_\Theta(\optvar_t+\eta \bm{p}_t), \qquad \eta\in[0,1].
\]
In our implementation, $[0,1]$ is discretized into a grid of candidate step sizes and Thompson
sampling is used to select the next point along this line. The selected point is then used as the next
iterate according to the preference-based line-search rule. The full procedure is summarized in
Algorithm~\ref{alg:prefsqp}.

\begin{algorithm}[t]
\caption{PrefSQP}
\label{alg:prefsqp}
\begin{algorithmic}[1]
\STATE Initialize data set \(\mathcal D_0\), current iterate \(\optvar_0\), local sampling radius \(\epsilon\), line-search resolution \(R\)
\FOR{\(t=0,\ldots,T-1\)}
    \STATE Draw local samples \(\{\optvar_{t,1},\ldots,\optvar_{t,M}\}\subset \mathcal B_\epsilon(\optvar_t)\cap\Theta\), with \(M=d\)
    \FOR{\(i=1,\ldots,M\)}
        \STATE Compare local sample \(\optvar_{t,i}\) against current iterate \(\optvar_t\)
        \STATE Observe ordered comparison \((\optvar^+_{t,i},\optvar^-_{t,i})\)
        \STATE Update \(\mathcal D_t \leftarrow \mathcal D_t \cup \{(\optvar^+_{t,i},\optvar^-_{t,i})\}\)
    \ENDFOR
    \STATE Fit pairwise GP model using \(\mathcal D_t\)
    \STATE Compute posterior mean gradient \(\bm{g}_t \leftarrow \bm{\mu}^{\mathrm{pref}}_{\nabla f}(\optvar_t)\)
    \STATE Compute posterior mean Hessian \(\bm{H}_t \leftarrow \bm{\mu}^{\mathrm{pref}}_{\nabla^2 f}(\optvar_t)\)
    \STATE Modify \(\bm{H}_t\) to obtain a negative definite matrix \(\widetilde{\bm{H}}_t\)
    \STATE Compute SQP direction \(\bm{p}_t \leftarrow -\widetilde{\bm{H}}_t^{-1} \bm{g}_t\)
    \STATE Construct line-search set
    \[
        \mathcal L_t
        =
        \left\{
        \Pi_\Theta(\optvar_t+\eta_r \bm{p}_t)
        \;:\;
        \eta_r \in \left\{0,\frac{1}{R-1},\ldots,1\right\}
        \right\}
    \]
    \STATE Select \(\optvar_{t+1}\in\mathcal L_t\) using Thompson sampling under the pairwise GP
    \STATE Set \(\mathcal D_{t+1}\leftarrow\mathcal D_t\)
\ENDFOR
\STATE \textbf{return} best observed incumbent \(\optvar^\star\)
\end{algorithmic}
\end{algorithm}

\subsection{TuRPBO: Trust-Region Preferential Bayesian Optimization}
\label{app:turpbo_full}
TuRPBO follows the trust-region adaptation rules of TuRBO, but replaces scalar improvement by
pairwise preference improvement. Let $\optvar_t^\star$ be the current incumbent and
$B_t=\{\optvar_{t,1},\ldots,\optvar_{t,q}\}$ the batch selected inside the trust region. An iteration is
successful if at least one candidate is preferred to the incumbent,
\[
    \exists \optvar_{t,i}\in B_t \quad \text{s.t.} \quad \optvar_{t,i} \succ \optvar_t^\star .
\]
In that case, the incumbent is updated to the preferred candidate, the success counter is
incremented, and the failure counter is reset. Otherwise, the incumbent remains unchanged, the
failure counter is incremented, and the success counter is reset. After $\tau_{\mathrm{succ}}$
consecutive successes, the trust-region length is doubled up to $L_{\max}$; after
$\tau_{\mathrm{fail}}$ consecutive failures, it is halved. If the length drops below $L_{\min}$,
the trust region is restarted.

\subsubsection{Full algorithm}
\begin{algorithm}[H]
\caption{TurPBO}
\label{alg:turpbo}
\begin{algorithmic}[1]
\STATE Initialize data set \(\mathcal D_0\), incumbent \(\optvar_0^\star\), trust-region length \(L_0=L_{\mathrm{init}}\), $\tau_{\mathrm{fail}}$ and $\tau_{\mathrm{succ}}$
\STATE Set \(c_{\mathrm{succ}}=0\), \(c_{\mathrm{fail}}=0\)
\FOR{\(t=0,\ldots,T-1\)}
    \STATE Fit pairwise GP model using \(\mathcal D_t\)
    \STATE Construct trust region \(\mathcal T_t\) around \(\optvar_t^\star\)
    \STATE Select batch \(B_t=\{\optvar_{t,1},\ldots,\optvar_{t,q}\}\subset \mathcal T_t\) using TS or qEUBO
    \STATE Compare candidates in \(B_t\) against incumbent \(\optvar_t^\star\)
    \IF{some \(\optvar_{t,i}\in B_t\) is preferred to \(\optvar_t^\star\)}
        \STATE Set \(\optvar_{t+1}^\star\) to the preferred candidate
        \STATE \(c_{\mathrm{succ}}\leftarrow c_{\mathrm{succ}}+1\), \(c_{\mathrm{fail}}\leftarrow 0\)
    \ELSE
        \STATE \(\optvar_{t+1}^\star\leftarrow \optvar_t^\star\)
        \STATE \(c_{\mathrm{fail}}\leftarrow c_{\mathrm{fail}}+1\), \(c_{\mathrm{succ}}\leftarrow 0\)
    \ENDIF
    \IF{\(c_{\mathrm{succ}}=\tau_{\mathrm{succ}}\)}
        \STATE \(L_{t+1}\leftarrow \min(2L_t,L_{\max})\)
        \STATE Reset \(c_{\mathrm{succ}}\) and \(c_{\mathrm{fail}}\)
    \ELSIF{\(c_{\mathrm{fail}}=\tau_{\mathrm{fail}}\)}
        \STATE \(L_{t+1}\leftarrow L_t/2\)
        \STATE Reset \(c_{\mathrm{succ}}\) and \(c_{\mathrm{fail}}\)
    \ELSE
        \STATE \(L_{t+1}\leftarrow L_t\)
    \ENDIF
    \IF{\(L_{t+1}<L_{\min}\)}
        \STATE Restart trust region around a new incumbent
    \ENDIF
    \STATE Update \(\mathcal D_{t+1}\) with the newly observed comparisons
\ENDFOR
\end{algorithmic}
\end{algorithm}

\clearpage
\section{Experimental Details and Additional Results}
\label{app:experiments}

\subsection{Shared Experimental Setup}
\paragraph{Evaluation and comparison budgets.}
We report budgets in terms of function evaluations, since these are the expensive operations in our target applications. In the pairwise setting, the number of comparisons is nevertheless almost identical to the number of function evaluations: after the two initial evaluations, every newly evaluated point is compared against an incumbent. Since function values are not observed by the algorithms, the incumbent is defined purely from preference data as the current winner under the observed comparisons. For most methods this is the best-seen point so far, updated whenever a newly evaluated point is preferred to the current incumbent. Thus, $N$ function evaluations yield $N-1$ pairwise comparisons. For the gradient-based methods, comparisons are made only during the local gradient-estimation phases against a local incumbent; the subsequent center updates are not immediately evaluated. Hence, if $G$ gradient steps are taken, $N$ function evaluations yield $N-G$ pairwise comparisons. Thus, a budget such as $2+10d$ refers to function evaluations, not acquisition steps, but corresponds to approximately the same number of pairwise comparisons.

\begin{table}[h]
    \centering
    \small
    \caption{Experiment-level settings used across benchmark experiments. Values marked ``experiment-specific'' are reported in the corresponding benchmark sections.}
    \label{tab:experiment_hyperparameters}
    \renewcommand{\arraystretch}{1.15}
    \begin{tabular}{p{6.2cm} p{6.6cm}}
        \toprule
        \textbf{Setting} & \textbf{Value / Description} \\
        \midrule
        Number of independent runs & 10 \\
        Initial design size for synthetic and GP benchmarks & 2 random points \\
        Initial design size for policy-search benchmarks & $5d$ random evaluations \\
        Optimization budget for GP benchmarks & $2 + 10d$ evaluations \\
        Optimization budget for synthetic benchmarks & $2 + 10d$ evaluations \\
        Optimization budget for policy-search benchmarks & $5d + 10d$ evaluations \\
        Input normalization & All benchmark domains mapped to $[0,1]^d$ \\
        Observation type given to algorithms & Noisy pairwise comparisons only \\
        Noise model & Gaussian noise with standard deviation equal to $10\%$ of the approximate function-value range \\
        Reported metric & Best-seen value $f_{\mathrm{best}}$ and cumulative performance $\sum_k f_k$ \\
        Uncertainty visualization & Median with 25/75 percentiles, unless stated otherwise \\
        Hyperparameter learning & Experiment-specific; known for within-model GP experiments and learned for out-of-model/synthetic/policy experiments \\
        Baselines & Sobol, GLISp, qEUBO, HB-EI \\
        Local methods & GIPBO, GIPBOr, PrefSQP, TurPBO \\
        \bottomrule
    \end{tabular}
\end{table}

\subsection{Shared PairwiseGP Settings}
\begin{table}[h]
    \centering
    \small
    \caption{Shared surrogate-model and acquisition-optimization settings for PairwiseGP-based methods. These settings apply to EUBO, GIPBO, GIPBOr, PrefSQP, and TuRPBO unless stated otherwise.}
    \label{tab:shared_pairwisegp_hyperparameters}
    \renewcommand{\arraystretch}{1.15}
    \begin{tabular}{p{6.2cm} p{6.6cm}}
        \toprule
        \textbf{Hyperparameter} & \textbf{Value / Description} \\
        \midrule
        GP model & BoTorch \texttt{PairwiseGP} \\
        Kernel & ScaleKernel with squared-exponential base kernel \\
        Input domain & Unit hypercube $[0,1]^d$ \\
        Lengthscale constraint & $[0.05, 0.5]$ \\
        Preference noise convention & BoTorch convention with implicit $\sigma=1$ and adjusted output scale \\
        Acquisition optimization restarts & 5 \\
        Raw samples for acquisition optimization & 1024 \\
        Batch size $q$ & 1 unless stated otherwise \\
        \bottomrule
    \end{tabular}
\end{table}

Note that the \texttt{PairwiseGP} implementation in BoTorch, used for local PBO methods and EUBO, 
implicitly assumes that the observation noise $\sigma$ equals $1$. Different noise levels are 
therefore represented through the kernel output scale. We follow this convention when computing 
preference likelihoods for the conditional expectation in the adapted GI acquisition function.
\FloatBarrier
\newpage
\subsection{Algorithm Configurations}
\begin{table}[h]
    \centering
    \small
    \caption{Hyperparameters and implementation choices for GLISp.}
    \label{tab:glisp_hyperparameters}
    \renewcommand{\arraystretch}{1.15}
    \begin{tabular}{p{6.2cm} p{6.6cm}}
        \toprule
        \textbf{Hyperparameter} & \textbf{Value / Description} \\
        \midrule
        Comparison tolerance & $10^{-4}$ \\
        Initial random points for virtual initialization & 10 \\
        Lower bound & $0$ in each dimension \\
        Upper bound & $1$ in each dimension \\
        Input domain & Unit hypercube $[0,1]^d$ \\
        \bottomrule
    \end{tabular}
\end{table}

\begin{table}[h]
    \centering
    \small
    \caption{Hyperparameters and implementation choices for GIPBO and GIPBOr.}
    \label{tab:gipbo_hyperparameters}
    \renewcommand{\arraystretch}{1.15}
    \begin{tabular}{p{6.2cm} p{6.6cm}}
        \toprule
        \textbf{Hyperparameter} & \textbf{Value / Description} \\
        \midrule
        Gradient step length $\eta$ & 0.5 \\
        Minimum absolute step length & 0.05 \\
        Number of local queries per gradient estimate $M$ & $d$ \\
        Number of gradient steps $N$ & 10 \\
        Acquisition search bounds $(\delta_{\min}, \delta_{\max})$ & $[0.05, 0.2]$ \\
        Resolution for $d>2$ & 50 \\
        Hypersphere radii for $d>2$ & $[0.05, 0.1, 0.15, 0.2]$ \\
        Virtual preference likelihood threshold & 0.1 \\
        Gradient normalization & Mahalanobis norm induced by SE lengthscales \\
        GIPBOr acquisition selection & Random samples within the same local search bounds \\
        \bottomrule
    \end{tabular}
\end{table}

\begin{table}[h]
    \centering
    \small
    \caption{Hyperparameters and implementation choices for PrefSQP.}
    \label{tab:prefsqp_hyperparameters}
    \renewcommand{\arraystretch}{1.15}
    \begin{tabular}{p{6.2cm} p{6.6cm}}
        \toprule
        \textbf{Hyperparameter} & \textbf{Value / Description} \\
        \midrule
        Local sampling radius $\epsilon$ & 0.05 \\
        Number of local samples per SQP step & $d$ \\
        Hessian modification & Eigenvalue modification to enforce negative definiteness \\
        Line-search domain & $\eta \in [0,1]$ \\
        Line-search resolution & 100 \\
        Line-search acquisition & Thompson sampling unless stated otherwise \\
        Step direction & Newton-type direction from posterior mean gradient and Hessian \\
        \bottomrule
    \end{tabular}
\end{table}

\begin{table}[h]
    \centering
    \small
    \caption{Hyperparameters and implementation choices for TuRPBO. Here, $d$ denotes the problem dimensionality and $q$ the batch size.}
    \label{tab:turpbo_hyperparameters}
    \renewcommand{\arraystretch}{1.15}
    \begin{tabular}{p{6.2cm} p{6.6cm}}
        \toprule
        \textbf{Hyperparameter} & \textbf{Value / Description} \\
        \midrule
        Batch size $q$ & 1 \\
        Acquisition function & Thompson sampling unless stated otherwise \\
        Acquisition optimization restarts & 5 \\
        Raw samples for acquisition optimization & 1024 \\
        Initial trust-region length $L_{\mathrm{init}}$ & 0.8 \\
        Minimum trust-region length $L_{\min}$ & $0.5^7$ \\
        Maximum trust-region length $L_{\max}$ & 1.6 \\
        Success tolerance $\tau_{\mathrm{succ}}$ & $d$ \\
        Failure tolerance $\tau_{\mathrm{fail}}$ & $\lceil \max(4/q, d/q) \rceil$ \\
        Trust-region expansion & $L \leftarrow \min(2L, L_{\max})$ \\
        Trust-region contraction & $L \leftarrow L/2$ \\
        Restart criterion & Restart if $L < L_{\min}$ \\
        \bottomrule
    \end{tabular}
\end{table}
\clearpage
\subsection{GP Sample-Path Experiments}
\label{app:gp_experiments}
In addition to the plots in Figure \ref{fig:gp_comp}, that only shows the final performances for the different algorithms in the within and out-of-model comparison, Figure \ref{fig:within-performance-curves} and \ref{fig:out-of-performance-curves} show the full performance curves for these experiments. 
In the experiments, we sampled a new GP sample for each run with a random initialization. All algorithms used the same initial points per run. 

\begin{figure}[h]
    \centering
    \includegraphics[width=\linewidth]{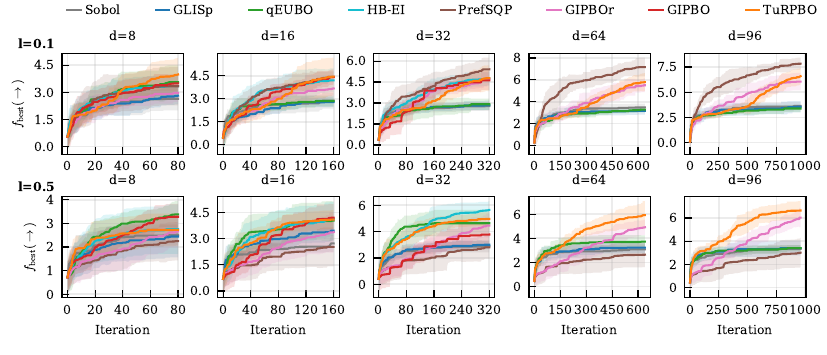}
    \caption{Within-model comparison. Performance over iterations for GP samples between $8$ and $96d$.}
    \label{fig:within-performance-curves}
\end{figure}

\begin{figure}[h]
    \centering
    \includegraphics[width=\linewidth]{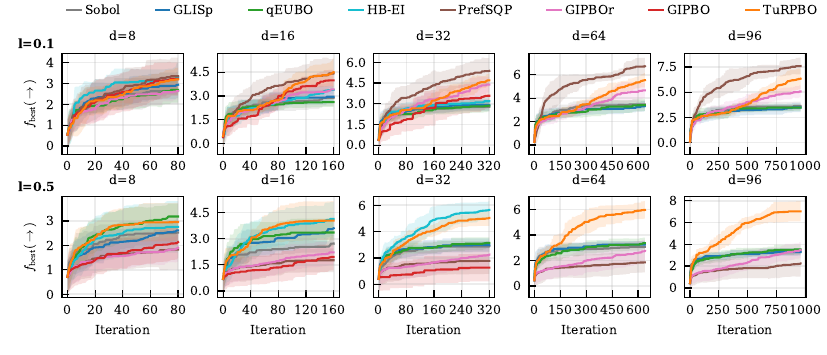}
    \caption{Out-of-model comparison. Performance over iterations for GP samples between $8$ and $96d$.}
    \label{fig:out-of-performance-curves}
\end{figure}

GIPBOr performs worse than full GIPBO in 8 and 16 dimensions, especially for short lengthscales. In 32 dimensions, this advantage largely vanishes. This is caused by the implementation cap on the number of evaluated acquisition candidates: evaluating the full preferential GI acquisition becomes expensive, and the capped candidate set is too sparse to reliably approximate the acquisition maximizer in higher dimensions.

Comparing the within and out-of model comparison, especially GIPBO and GIPBOr degrade in performance, when the model is not well-specified. This can be explained by the fact that the gradient step is scaled using the lengthscale as described in \ref{sec:GIPBO_implementation_details}.

Furthermore, in the within-model comparison, the HB-EI method performs competitively also for short lengthscales. Demonstrating the advantages of the more computationally expensive skew GP model.

\subsection{Standard Benchmark Details}
For the synthetic benchmark experiments, we use the test functions from the BoTorch library \cite{balandat2020botorch}. All benchmark domains are normalized to $[0,1]^d$ and all algorithms are initialized with the same two random points in each run. The optimization budget is $2+10d$ evaluations. The algorithms observe only noisy pairwise comparisons, where the noise standard deviation is set to $10\%$ of the approximate function-value range. 
The main synthetic benchmark results are shown in Figure~\ref{fig:synthetic} and the compute times are reported in \ref{tab:compute-time}.

As in the results we only report mean $\pm$ stdv in our plots for better visibility, we report the results again in more detail and perform a statistical significance test. We use a pairwise Wilcoxon signed-rank test \cite{wilcoxon1945} to compare our results similar to the test in \cite{erarslan_consecutive_2025}. As we perform multiple tests, we apply a Holm correction to the results. We always test if the best performing algorithm (bold in the plot) performs significantly different than the others. Algorithms which are not significantly performing differently than the best performing one are underlined in the plot. 

\begin{figure}[h]
    \centering
    \includegraphics[width=0.9\linewidth]{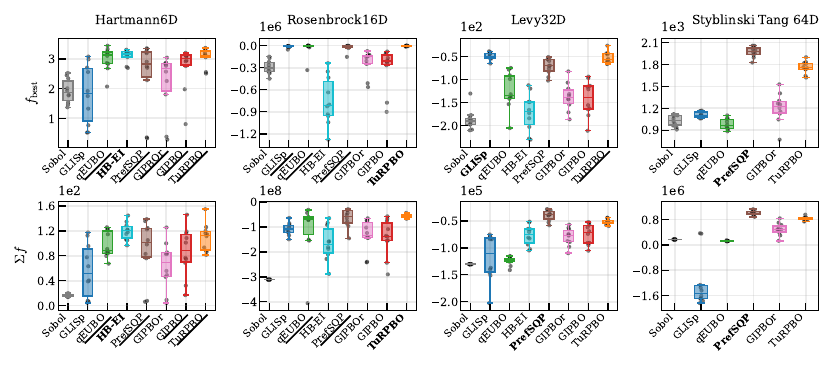}
    \caption{Final performance boxplots for the different optimization functions. The best performing algorithm has a bold label. The ones that that are significantly different have a normal label, the ones that are not statistically different are underlined.}
    \label{fig:sythetic_signifance}
\end{figure}

\subsection{Policy-Search}
\label{app:policy_details}

For the policy search task we use the implementation from ARS \cite{horia2018simple}. 
We warmstart the experiment by selecting the best point out of $5d$ iterations and comparing it to a random value to emulate the case of a promising starting point. In Table~\ref{tab:benchmark-final-cumulative-best-only-statistics}, we report the results and use a paired Wilcoxon as in the previous section, to determine, if the best performing algorithm outperforms the baselines statistically significant. The bold value in the table is the best algorithm, the underlined value are not significantly different.   

\begin{table}[h]
\centering
\caption{Final and cumulative benchmark performance in the policy search task with warmstarting, the bold label is the best for each metric, the underlined values are not statistically significantly difference in the statistical test. }
\resizebox{\textwidth}{!}{%
\begin{tabular}{llcccccccc}
\toprule
Metric & Function & Sobol & GLISp & qEUBO & HB-EI & PrefSQP & GIPBOr & GIPBO & TuRPBO \\
\midrule
$f_{\mathrm{best}} \uparrow$ &
HopperLinearPolicyTask 33D &
360 $\pm$ 239 &
630 $\pm$ 346 &
629 $\pm$ 301 &
372 $\pm$ 246 &
427 $\pm$ 239 &
453 $\pm$ 330 &
489 $\pm$ 318 &
768 $\pm$ 305 \\
$f_{\mathrm{best}} \uparrow$ &
Walker2dLinearPolicyTask 102D &
113 $\pm$ 39 &
112 $\pm$ 47 &
122 $\pm$ 52 &
-- &
\underline{202 $\pm$ 83} &
\underline{267 $\pm$ 131} &
-- &
\textbf{275 $\pm$ 109} \\
$\sum_k f_{k} \uparrow$ &
HopperLinearPolicyTask 33D &
4888 $\pm$ 519 &
17457 $\pm$ 7249 &
7324 $\pm$ 2582 &
\underline{70445 $\pm$ 47721} &
\underline{81884 $\pm$ 76833} &
\underline{82143 $\pm$ 59092} &
\underline{90767 $\pm$ 66906} &
\textbf{91796 $\pm$ 53199} \\
$\sum_k f_{k} \uparrow$ &
Walker2dLinearPolicyTask 102D &
-6580 $\pm$ 237 &
-5921 $\pm$ 622 &
-6975 $\pm$ 180 &
-- &
\underline{31895 $\pm$ 16962} &
\textbf{40389 $\pm$ 25571} &
-- &
\underline{37517 $\pm$ 18229} \\
\bottomrule
\end{tabular}
}
\label{tab:benchmark-final-cumulative-best-only-statistics}
\end{table}

In addition to the warmstarted RL-experiment, we also performed the experiment starting from an initialization with one comparison consisting of two random data points. 
\begin{figure}[h]
    \centering
    \includegraphics[width=\linewidth]{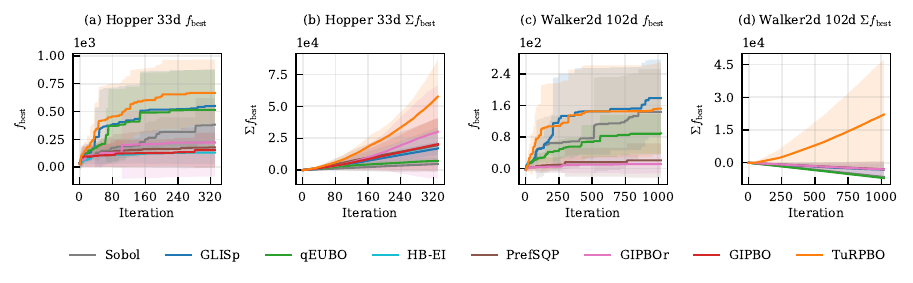}
    \caption{Mujoco Benchmark without warmstarting}
    \label{fig:rl_without_warmstart}
\end{figure}

Starting from a bad initialization leads to a performance degrade of the gradient based-search methods, that perform even worse than random search. Nevertheless, in terms of cumulative performance, they still perform better or equally to GLISp which has a higher best observed performance. 

TuRBO performs consistently well in the experiments, however, the final performance is not better than random search in the hopper example. 

\subsection{Ablations}
\label{app:ablations}
For the different algorithm variants, we perform an ablation in $32d$. We conduct the experiments in an out-of-model comparison on $32d$ GP samples with lengthscales $0.1$ and $0.5$, on the Levy function in $32d$ as well as the Styblinsky-Tang $32d$ test function. 

For PrefSQP, we ablate the different variants with different acquisition options for the gradient: using the suggested step length (fixed, $\eta = 1$), using line search with EUBO or with thompson sampling. Futhermore, we investigate if a Newton step or the uncertainty aware subproblem from \cite{brunzema_bayesqp_2026} should be used (EV). Figure~\ref{fig:PrefSQP-ablation} shows that in the synthetic setting, all variants perform similarly. On the GP functions, the fixed Newton step works best.  
\begin{figure}[h]
    \centering
    \includegraphics[width=\linewidth]{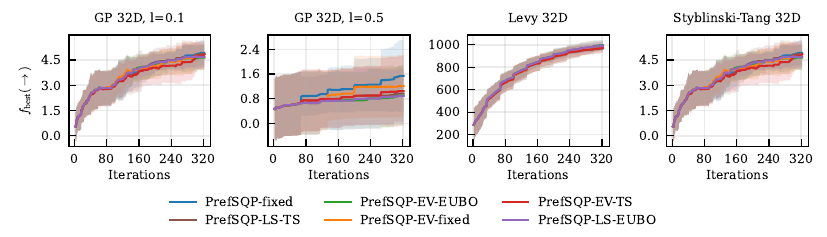}
    \caption{PrefSQP variant ablation}
    \label{fig:PrefSQP-ablation}
\end{figure}

For TuRPBO, we also compare different acquisition strategies. We compare the perturbed Thompson sampling variant, which is our base TuRPBO version, to expected improvement and qEUBO \cite{astudillo2023qEUBO}. On the GP samples TuRPBO performs best, while on the standard benchmarking functions, the qEUBO variant performs equivalently well.  
\begin{figure}[h]
    \centering
    \includegraphics[width=\linewidth]{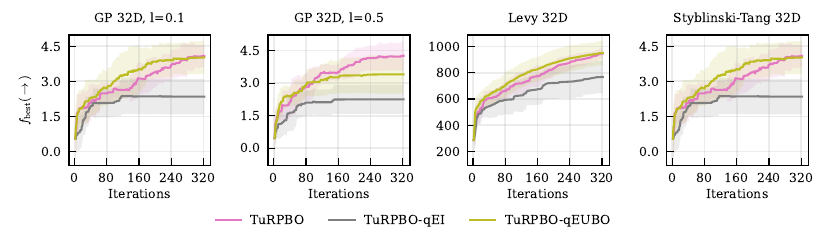}
    \caption{Different Acquisition strategies in TuRPBO}
    \label{fig:placeholder}
\end{figure}

In all our settings, we compare against qEUBO with batch size one. However, a key selling point of qEUBO is, that the batch size can be increased and therefore the number of comparisons can be reduced. In this paper, we focus on the number of experiments, as we assume that in a policy searching task the environment interactions are more expensive than the comparisons. Nevertheless, we also compare the performance of different qEUBO batch sizes as a reference in Figure~\ref{fig:batch_qeubo}. 
\begin{figure}[h]
    \centering
    \includegraphics[width=\linewidth]{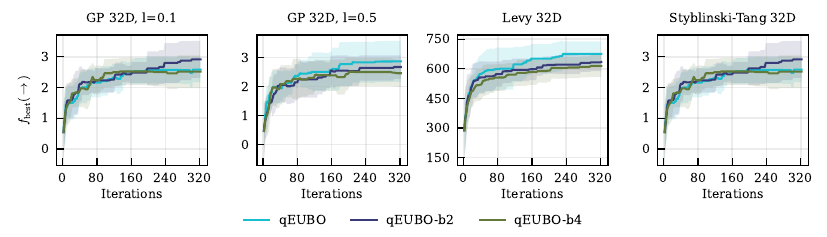}
    \caption{Comparison of qEUBO batch size 1, 2 and 4}
    \label{fig:batch_qeubo}
\end{figure}
In our benchmarking setup, all qEUBO variants perform equally over the number of experiments, however batch size two would only need half the number of decisions and batch size four only a fourth. However, the final performance after the number of iterations in the $32d$ case is still lower than all in the local variants we propose. 

Finally, we ablate different noise level in the same setting to evaluate the robustness of the algorithms towards noise and plot the results in Figure~\ref{fig:noise}.

\begin{figure}[h]
    \centering
    \includegraphics[width=\linewidth]{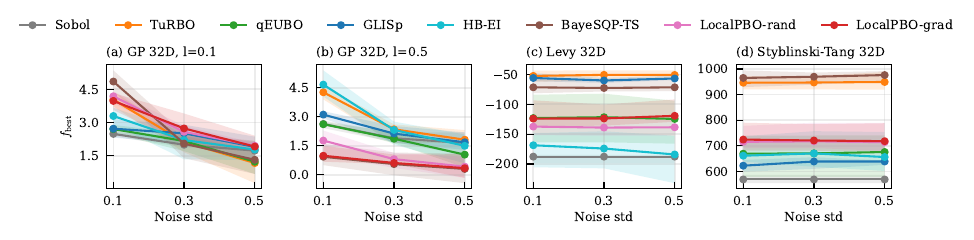}
    \caption{Noise Ablation}
    \label{fig:noise}
\end{figure}

\newpage
\section{Appendix D: Computational Resources} \label{app:resources}
All experiments were conducted using CPUs only on a high-performance computing (HPC) cluster. At the time of the experiments, the cluster ran \textit{Rocky Linux 9.6}. Jobs were submitted via Slurm to the \texttt{c23mm} partition. Each run was executed on a single node with one Slurm task and eight CPU cores allocated to that task, with \texttt{5000M} of memory requested per CPU core. 

The compute time for the main results are reported as average per run in table \ref{tab:compute-time2}.

\begin{table}[H]
\centering
\caption{Compute time per algorithm across benchmark functions per run ($10d$ iterations) with mean and standard deviation over all 10 runs for each experiment.}
\resizebox{\textwidth}{!}{%
\begin{tabular}{lcccccccc}
\toprule
 & \multicolumn{4}{c}{Global methods} & \multicolumn{4}{c}{Local methods} \\
\cmidrule(lr){2-5}\cmidrule(lr){6-9}
Function & Sobol & GLISp & qEUBO ($q=1$) & HB-EI & GIPBO & GIPBOr & TuRBO & BayeSQP \\
\midrule
{[within] GPSample 8D ls=0.1} & 0.039 $\pm$ 0.003 & 15 $\pm$ 9 & 14 $\pm$ 10 & 107 $\pm$ 9 & 311 $\pm$ 24 & 1.3 $\pm$ 1.0 & 45 $\pm$ 3 & 6.7 $\pm$ 1.8 \\
\addlinespace
{[within] GPSample 8D ls=0.5} & 0.037 $\pm$ 0.001 & 6.0 $\pm$ 1.8 & 19 $\pm$ 5 & 287 $\pm$ 31 & 532 $\pm$ 232 & 1.1 $\pm$ 0.4 & 47 $\pm$ 3 & 12 $\pm$ 9 \\
\addlinespace
{[within] GPSample 16D ls=0.1} & 0.099 $\pm$ 0.022 & 12 $\pm$ 2 & 75 $\pm$ 26 & 626 $\pm$ 70 & 1794 $\pm$ 105 & 1.6 $\pm$ 0.5 & 101 $\pm$ 14 & 21 $\pm$ 4 \\
\addlinespace
{[within] GPSample 16D ls=0.5} & 0.097 $\pm$ 0.022 & 12 $\pm$ 2 & 96 $\pm$ 30 & 1206 $\pm$ 218 & 2900 $\pm$ 472 & 2.1 $\pm$ 0.6 & 103 $\pm$ 10 & 30 $\pm$ 15 \\
\addlinespace
{[within] GPSample 32D ls=0.1} & 0.42 $\pm$ 0.65 & 30 $\pm$ 5 & 341 $\pm$ 132 & 6294 $\pm$ 494 & 28955 $\pm$ 15083 & 3.5 $\pm$ 0.8 & 345 $\pm$ 125 & 117 $\pm$ 50 \\
\addlinespace
{[within] GPSample 32D ls=0.5} & 0.39 $\pm$ 0.63 & 30 $\pm$ 7 & 193 $\pm$ 64 & 6852 $\pm$ 981 & 34879 $\pm$ 6996 & 3.9 $\pm$ 0.7 & 252 $\pm$ 31 & 147 $\pm$ 51 \\
\addlinespace
{[within] GPSample 64D ls=0.1} & 0.49 $\pm$ 0.16 & 91 $\pm$ 10 & 13171 $\pm$ 3467 & -- & -- & 12 $\pm$ 4 & 1564 $\pm$ 604 & 839 $\pm$ 172 \\
\addlinespace
{[within] GPSample 64D ls=0.5} & 0.49 $\pm$ 0.07 & 94 $\pm$ 19 & 1707 $\pm$ 1100 & -- & -- & 17 $\pm$ 3 & 1195 $\pm$ 229 & 1583 $\pm$ 665 \\
\addlinespace
{[within] GPSample 96D ls=0.1} & 0.93 $\pm$ 0.40 & 227 $\pm$ 17 & 27048 $\pm$ 10433 & -- & -- & 28 $\pm$ 4 & 5902 $\pm$ 3369 & 3710 $\pm$ 295 \\
\addlinespace
{[outof] GPSample 8D ls=0.1} & 0.042 $\pm$ 0.003 & 10 $\pm$ 7 & 40 $\pm$ 6 & 304 $\pm$ 19 & 368 $\pm$ 40 & 3.1 $\pm$ 1.1 & 58 $\pm$ 4 & 23 $\pm$ 6 \\
\addlinespace
{[outof] GPSample 8D ls=0.5} & 0.044 $\pm$ 0.007 & 8.4 $\pm$ 2.7 & 42 $\pm$ 8 & 285 $\pm$ 23 & 432 $\pm$ 25 & 2.9 $\pm$ 0.6 & 56 $\pm$ 8 & 22 $\pm$ 3 \\
\addlinespace
{[outof] GPSample 16D ls=0.1} & 0.18 $\pm$ 0.26 & 13 $\pm$ 2 & 420 $\pm$ 673 & 1419 $\pm$ 233 & 1939 $\pm$ 161 & 10.0 $\pm$ 18.6 & 303 $\pm$ 527 & 309 $\pm$ 759 \\
\addlinespace
{[outof] GPSample 16D ls=0.5} & 0.25 $\pm$ 0.48 & 11 $\pm$ 3 & 191 $\pm$ 98 & 1304 $\pm$ 217 & 1984 $\pm$ 242 & 4.0 $\pm$ 1.8 & 135 $\pm$ 18 & 61 $\pm$ 15 \\
\addlinespace
{[outof] GPSample 32D ls=0.1} & 0.22 $\pm$ 0.06 & 29 $\pm$ 6 & 775 $\pm$ 367 & 6787 $\pm$ 500 & 13568 $\pm$ 1851 & 7.0 $\pm$ 0.9 & 489 $\pm$ 88 & 310 $\pm$ 54 \\
\addlinespace
{[outof] GPSample 32D ls=0.5} & 0.20 $\pm$ 0.03 & 30 $\pm$ 6 & 943 $\pm$ 599 & 7015 $\pm$ 437 & 14496 $\pm$ 3794 & 8.1 $\pm$ 1.2 & 445 $\pm$ 82 & 264 $\pm$ 54 \\
\addlinespace
{[outof] GPSample 64D ls=0.1} & 0.87 $\pm$ 0.73 & 87 $\pm$ 12 & 3210 $\pm$ 2094 & -- & -- & 36 $\pm$ 10 & 2682 $\pm$ 694 & 2650 $\pm$ 266 \\
\addlinespace
{[outof] GPSample 64D ls=0.5} & 1.0 $\pm$ 0.8 & 91 $\pm$ 14 & 3873 $\pm$ 1609 & -- & -- & 49 $\pm$ 27 & 2683 $\pm$ 368 & 2422 $\pm$ 339 \\
\addlinespace
{[outof] GPSample 96D ls=0.1} & 0.93 $\pm$ 0.21 & 243 $\pm$ 25 & 12120 $\pm$ 3617 & -- & -- & 109 $\pm$ 54 & 11096 $\pm$ 1346 & 12789 $\pm$ 1605 \\
\addlinespace
{[outof] GPSample 96D ls=0.5} & 0.78 $\pm$ 0.19 & 242 $\pm$ 27 & 16148 $\pm$ 11031 & -- & -- & 100 $\pm$ 22 & 11050 $\pm$ 2367 & 11315 $\pm$ 2092 \\
\addlinespace
{[within] GPSample 96D ls=0.5} & 1.7 $\pm$ 1.4 & 237 $\pm$ 23 & 11749 $\pm$ 6302 & -- & -- & 45 $\pm$ 5 & 4142 $\pm$ 603 & 7727 $\pm$ 3790 \\
\addlinespace
{[rl] Hopper} & 3.3 $\pm$ 0.2 & 29 $\pm$ 3 & 1418 $\pm$ 993 & 7082 $\pm$ 629 & 15894 $\pm$ 2471 & 35 $\pm$ 15 & 489 $\pm$ 73 & 351 $\pm$ 114 \\
\addlinespace
{[rl] Walker2d} & 11 $\pm$ 1 & 287 $\pm$ 27 & 22652 $\pm$ 17705 & -- & -- & 164 $\pm$ 42 & 17039 $\pm$ 7493 & 15262 $\pm$ 7326 \\
\bottomrule
\end{tabular}
}
\label{tab:compute-time2}
\end{table}

The used software packages are listed in \ref{tab:software}.

\begin{table}[h]
\centering
\caption{Software packages used in the benchmark environment.}
\begin{tabular}{ll}
\toprule
\textbf{Package} & \textbf{Version} \\
\midrule
Python        & 3.12 \\
\addlinespace
NumPy         & 1.26.4 \\
SciPy         & 1.12.0 \\
pandas        & 3.0.2 \\
scikit-learn  & 1.8.0 \\
numba         & 0.65.1 \\
Cython        & 3.2.4 \\
Matplotlib    & 3.10.9 \\
\addlinespace
PyTorch       & 2.11.0 (CPU) \\
GPyTorch      & 1.14 \\
linear-operator & 0.6 \\
BoTorch       & 0.15.1 \\
GPy           & 1.13.2 \\
paramz        & 0.9.6 \\
\addlinespace
CVXOPT        & 1.3.3 \\
GLIS          & 2.0.2 \\
pyDOE         & 0.9.3 \\
pyswarm       & 0.8.0 \\
pyro-ppl      & 1.9.1 \\
\addlinespace
MuJoCo        & 3.8.0 \\
Gymnasium     & 1.3.0 \\
\addlinespace
tqdm          & 4.67.3 \\
\bottomrule
\end{tabular}
\label{tab:software}
\end{table}

%\clearpage
%\input{checklist.tex}

\end{document}